\documentclass{article}

\PassOptionsToPackage{numbers, sort}{natbib}
\bibliographystyle{unsrtnat}

\usepackage[final]{neurips_2022}

\usepackage[utf8]{inputenc} %
\usepackage[T1]{fontenc}    %
\usepackage{hyperref}       %
\usepackage{url}            %
\usepackage{booktabs}       %
\usepackage{amsfonts}       %
\usepackage{nicefrac}       %
\usepackage{microtype}      %
\usepackage{xcolor}         %
\usepackage{autonum}
\usepackage{enumitem}
\usepackage{quoting}
\usepackage{wrapfig}
\usepackage[pdftex]{graphicx}

\usepackage{amsmath,amsfonts,bm, amsthm, mathrsfs, mathtools}

\newtheorem{proposition}{Proposition}

\newtheorem{problem}{Problem}

\numberwithin{theorem}{section}
\numberwithin{lemma}{section}
\numberwithin{proposition}{section}
\numberwithin{corollary}{section}

\def\eqref#1{equation~\ref{#1}}

\def\1{\bm{1}}

\def\rP{{\mathscr{P}}}

\def\ve{{\bm{e}}}

\def\vg{{\bm{g}}}

\def\vs{{\bm{s}}}

\def\vu{{\bm{u}}}
\def\vv{{\bm{v}}}
\def\vw{{\bm{w}}}
\def\vx{{\bm{x}}}
\def\vy{{\bm{y}}}
\def\vz{{\bm{z}}}

\DeclareMathAlphabet{\mathsfit}{\encodingdefault}{\sfdefault}{m}{sl}
\SetMathAlphabet{\mathsfit}{bold}{\encodingdefault}{\sfdefault}{bx}{n}

\def\gB{{\mathcal{B}}}

\def\gD{{\mathcal{D}}}

\def\gF{{\mathcal{F}}}

\def\gL{{\mathcal{L}}}

\def\gS{{\mathcal{S}}}

\def\gX{{\mathcal{X}}}
\def\gY{{\mathcal{Y}}}

\def\sR{{\mathbb{R}}}

\newcommand{\E}{\mathbb{E}}

\newcommand{\supp}{\text{supp}}

\DeclareMathOperator*{\argmin}{arg\,min}

\usepackage[ruled, lined, linesnumbered]{algorithm2e}

\usepackage{amssymb}
\SetKwInput{KwInput}{Input}                %
\SetKwInput{KwOutput}{Output}                %
\SetKwInput{KwReturn}{Return}
\SetKwInput{KwPara}{Parameter}

\usepackage{hyperref}
\hypersetup{
    colorlinks=true,
    citecolor =cyan,
    linkcolor=magenta,
    urlcolor=blue,
}
\makeatletter
\newcommand{\printfnsymbol}[1]{%
  \textsuperscript{\@fnsymbol{#1}}%
}
\makeatother

\title{Batch Active Learning from the Perspective of \\ Sparse Approximation}

\author{%
    \hspace{-0.7cm}Maohao Shen\thanks{Equal contribution. All authors completed the work at the University of Illinois Urbana-Champaign.} \\
    \hspace{-0.7cm}Massachusetts Institute of Technology\\
    \hspace{-0.7cm}\texttt{maohao@mit.edu}
    \And
    \hspace{-1cm}Bowen Jiang\printfnsymbol{1}\\
    \hspace{-1cm}University of Pennsylvania\\
    \hspace{-1cm}\texttt{bwjiang@seas.upenn.edu}
    \AND
    Jacky Y. Zhang\printfnsymbol{1}\\
    Stanford University\\
    \texttt{yiboz@stanford.edu}
    \And
    Oluwasanmi Koyejo \\
    Stanford University\\
    \texttt{sanmi@stanford.edu}
}

\begin{document}

\maketitle

\begin{abstract}
Active learning enables efficient model training by leveraging interactions between machine learning agents and human annotators. We study and propose a novel framework that formulates batch active learning from the sparse approximation's perspective. Our active learning method aims to find an informative subset from the unlabeled data pool such that the corresponding training loss function approximates its full data pool counterpart. We realize the framework as sparsity-constrained discontinuous optimization problems, which explicitly balance uncertainty and representation for large-scale applications and could be solved by greedy or proximal iterative hard thresholding algorithms. The proposed method can adapt to various settings, including both Bayesian and non-Bayesian neural networks. Numerical experiments show that our work achieves competitive performance across different settings with lower computational complexity.

\end{abstract}
\section{Introduction}\label{sec:intro}
Despite the promising results from deep neural networks, obtaining labels for a complex training datasets can still be challenging in practice. Often, this is because data annotation can be a time-consuming process that requires professional knowledge from human experts, e.g., in medicine~\citep{hoi2006batch, 9434098}. Different from traditional supervised learning techniques, \textit{Active Learning}~\citep{settles2009active}, as shown in Figure~\ref{fig: loop} curates interactions between machine learning agents and human annotators. Such a \textit{human-in-the-loop} learning can be used to mitigate the problem of scarce labeled data and enables efficient model training with limited annotation costs. Given a partially labeled dataset, active learning ideally selects data samples that are the best for learning. Specifically, it aims to iteratively query the most helpful data to ask an oracle (human annotator) to annotate. The queried data samples will be added back to the labeled data pool, and the model will be updated. This process is repeated until the model has achieved the desired performance. Intelligently identifying the most valuable data for annotation, also known as the query strategy, is the key problem in active learning.

A common approach is to employ the prediction uncertainty or data representation as the query metrics. For instance, \textit{uncertainty-based} approaches~\citep{settles2009active, tong2001support, gal2017deep, beluch2018power} work by querying samples with high uncertainty, but often results in selecting correlated and redundant data samples in each batch~\citep{kirsch2019batchbald, ducoffe2018adversarial}. On the other hand, \textit{representation-based} approaches~\citep{sener2017active, yang2019single} aim to select a subset of data that represents the whole unlabeled dataset, but tend to be computationally expensive and sensitive to batch sizes~\citep{ash2019deep, shui2020deep}. More recently, several \textit{hybrid} approaches~\citep{ash2019deep, shui2020deep, sinha2019variational} that try to consider both uncertainty and representation have shown advantages. Our work will take this hybrid view towards an active learning framework that \textit{balances the trade-off} between uncertainty and representation.
\begin{wrapfigure}{ro}{0.45\textwidth}
  \begin{center}
    \includegraphics[width=0.45\textwidth]{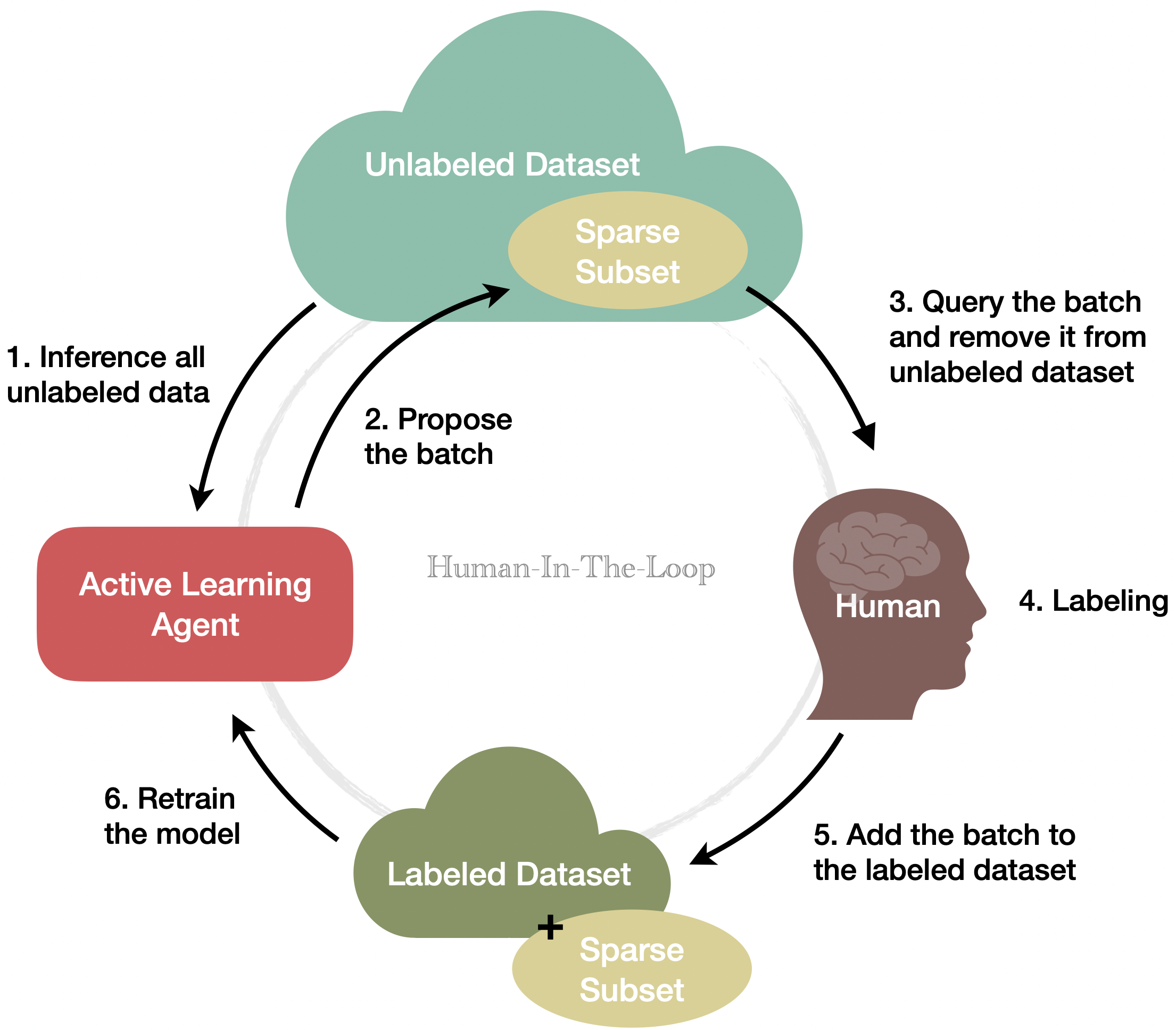}
  \end{center}
  \caption{Human-in-the-loop illustration.}
   \label{fig: loop}
  \vspace*{-2mm}
\end{wrapfigure}

Besides hybrid approaches, deep Bayesian active learning has also gained increasing attention. Several Bayesian approaches~\citep{gal2017deep, kirsch2019batchbald} leverage model uncertainty measurements~\citep{gal2015dropout, gal2016dropout} determined by Bayesian neural networks, while other works~\citep{pinsler2019bayesian} leverage recent progress in Bayesian Coreset problems~\citep{zhang2021bayesian, huggins2016coresets, campbell2019automated}. However, as most existing Bayesian approaches are explicitly designed for Bayesian neural networks, another goal of this paper is to propose a more \textit{general} approach that can adapt both Bayesian and non-Bayesian models.

For deep models, where updates after each batch can be computationally expensive, it is reasonable to query a large batch of data simultaneously to reduce model update frequency as this can be more efficient than sequential labeling. The batch selection setting is known as \textit{batch active learning}. However, from an optimization perspective, finding the best batch is NP-hard in general. Two common approaches for such combinatorial problems are the greedy and clustering approaches. Greedy algorithms select one data sample in sequence until the batch budget is exhausted~\citep{kirsch2019batchbald, biyik2019batch, chen2013near}. Here, specific conditions of the acquisition function, such as submodularity~\citep{nemhauser1978analysis}, are required to guarantee a good optimization result. Clustering algorithms regard cluster centers as their queried set~\citep{sener2017active, ash2019deep}, but they can be computationally expensive. To the best of our knowledge, except for \citet{pinsler2019bayesian} that focuses on Bayesian models, active learning has rarely been studied from a \textit{sparse approximation} perspective, despite the ubiquity of sparse approximation in signal processing for tasks such as dictionary learning~\citep{aharon2006k} and compressed sensing~\citep{donoho2006compressed}. Here we employ sparse approximation methods for batch active learning, leveraging their performance for discovering a sparse representation while avoiding redundancy. 

\textit{Our main contributions are summarized in the following:}
\vspace{-2mm}
\noindent
\begin{enumerate}
  \item We propose a flexible batch active learning framework from the perspective of sparse approximation, adaptable for both Bayesian and non-Bayesian settings.
  \item We realize this framework by deriving an upper bound to balance the trade-off between uncertainty and representation in a principled way.
  \item We approximate the loss functions that lead to a finite-dimensional, sparsity-constrained, and discontinuous optimization problem.
  \item We offer greedy and proximal iterative hard thresholding as two practical approaches for solving the optimization problem.
\end{enumerate}

The structure of this manuscript is as follows. We first formulate the active learning problem in Section~\ref{sec:prob-form}, and then realize the framework into a finite-dimensional discontinuous sparse optimization problem in Section~\ref{sec:finite}. To solve the resulting optimization problem, we demonstrate two optimization algorithms in Section~\ref{sec:method} with experimental results in Section~\ref{sec:exp}. Related work are discussed in Appendix~\ref{sec:related}. All proofs and training details are provided in Appendix~\ref{sec:proof} and \ref{sec:more_exp}. 
\section{Batch Active Learning as Sparse Approximation}
\label{sec:prob-form}
This section introduces the preliminaries and the conceptual formulation of batch active learning as a sparse approximation problem. 
\paragraph{Preliminaries} 
Vectors are denoted as bold lower case letters, \textit{e.g.}, $\vw\in \sR^n$. The $l_0$ pseudo-norm of a vector $\vw$ is denoted as $\|\vw\|_0$. We denote $\sR_+:=[0, +\infty)$. Distributions are denoted in script, \textit{e.g.}, $\rP$, and a random variable is denoted by tilde, \textit{e.g.}, $\tilde \vy \sim \rP$. We denote sets in calligraphy or in uppercase Greek alphabet (\textit{e.g.}, $\gD, \Theta$), and additionally we let $[n]:=\{1, 2, \dots, n\}$.  In supervised learning, given a labeled training dataset $\gD_l:=\{(\vx_i, \vy_i)\}_{i=1}^{n_l}$, where we denote their domain to be $\vx\in \gX$ and $\vy\in \gY$, the empirical goal is to minimize a loss function $L_l(\bm{\theta}):=\sum_{(\vx_i, \vy_i)\in \gD_l} \ell(\vx_i, \vy_i; \bm{\theta})$ formed by the training dataset, where $\bm{\theta}\in \Theta \subset \sR^m$ is the parameter of the model and $\ell$ is a loss function evaluated on individual pairs of data. Without loss of generality, we assume $\Theta\subset \sR^m$ is compact and $\ell(\vx, \vy; \cdot): \Theta \to \sR$ is in a normed space $(\gL(\Theta, \sR), \|\cdot\|_\dagger)$ for all $\vx, \vy$. We further assume the constant function $f:\Theta\to 1$ is included in $\gL(\Theta, \sR)$. The ``$\dagger$'' in the norm $\|\cdot\|_\dagger:\gL(\Theta, \sR) \to \sR_+$, representing its definition is a placeholder that will be discussed later.
 
\paragraph{Batch Active Learning} Besides the labeled dataset $\gD_l$, there is an unlabeled dataset $\gD_u:=\{\vx_j\}_{j=1}^{n_u}$ where the labels are unknown but could be acquired at a high cost through human labeling.  Combining two datasets, the ideal loss function to minimize w.r.t. $\bm{\theta}$ is
\begin{align}
    \textstyle \sum_{(\vx_i, \vy_i)\in \gD_l} \ell(\vx_i, \vy_i; \bm{\theta}) + \sum_{\vx_j \in \gD_u} \ell(\vx_j,\vy_j^\star;\bm{\theta}) , \label{eq:L-star}
\end{align}
where $\vy_j^\star$ is the unknown true label corresponding to the data $\vx_j$. Since acquiring true labels could be costly, we have to impose a budget $b$ ($b<n_u$) on the number of label acquisitions.  Therefore, the \textit{batch active learning} problem is to find a subset $\gS \subset \gD_u$ such that we can obtain a good model by optimizing the following loss function w.r.t. $\bm{\theta}$,
\begin{align}
    \textstyle\sum_{(\vx_i, \vy_i)\in \gD_l} \ell(\vx_i, \vy_i; \bm{\theta}) + \sum_{\vx_j \in \gS} \ell(\vx_j,\vy_j^\star;\bm{\theta}) , \quad \text{where } \ |\gS|=b. \label{eq:L-S-star}
\end{align}

\paragraph{Generalized Batch Active Learning} We start our method by generalizing the classical formulation (\eqref{eq:L-S-star}) by considering an importance weight for each unlabeled data. That is, we aim to find a sparse non-negative vector $\vw\in \sR^{n_u}_+$ such that we can obtain a good model by optimizing the following loss function w.r.t. $\bm{\theta}$:
\begin{align}
      \textstyle \sum_{(\vx_i, \vy_i)\in \gD_l} \ell(\vx_i, \vy_i; \bm{\theta}) + \sum_{\vx_j \in \gD_u} w_j \ell(\vx_j,\vy_j^\star;\bm{\theta}), \quad \text{where } \ \|\bm{w}\|_0=b. \label{eq:L-w-star}
\end{align}
A \textit{key question} now is---what is the criterion for a good $\bm{w}$? Comparing the ideal loss function (\eqref{eq:L-star}) and the sparse importance weighted loss (\eqref{eq:L-w-star}), the only difference is their unlabeled data loss functions. Therefore, a straight-forward informal criterion for a good importance weight $\vw$ is that the two unlabeled data loss functions are close to each other, \textit{i.e.},
\begin{align}
    \tilde{L}_{\vw}(\bm{\theta}):= \frac{1}{b}\sum_{\vx_j \in \gD_u} w_j \ell(\vx_j,\tilde{\vy}_j;\bm{\theta}) \quad \approx \quad \tilde{L}(\bm{\theta}):= \frac{1}{n_u}\sum_{\vx_j \in \gD_u} \ell(\vx_j,\tilde{\vy}_j;\bm{\theta}).  \label{eq:L-rv-approx}
\end{align}
Note that we change the notation $\vy_j^\star$ in equation~\ref{eq:L-w-star} to $\tilde{\vy}_j$ because true labels of unlabeled data are unknown. Luckily, we can have an estimator for the true labels, \textit{i.e.}, estimation based on the labeled data $p(\tilde \vy_j \mid \vx_j, \gD_l)$ or an approximation of it. Denote $\rP(\vx_j)$ as an estimated distribution, and therefore, $\tilde{\vy}_j \sim \rP(\vx_j)$. We are one step closer to evaluating the quality of a weighted selection. The next question is how to measure the difference between $\tilde L$ and $\tilde{L}_{\vw}$.

\paragraph{Difference Between Two Loss Functions} Given the two loss functions $\tilde{L}, \tilde{L}_{\vw}\in \gL(\Theta, \sR)$, where $\gL(\Theta, \sR)$ is equipped with the norm $\|\cdot\|_\dagger$, a straight-forward measurement of the difference between them is $\|\tilde{L}- \tilde{L}_{\vw}\|_\dagger$. However, observing that the optimization of a loss function is shift-invariant, the difference between two loss functions should also be shift-invariant. For example, for $\forall L\in \gL(\Theta, \sR)$ we have $\argmin_{\bm{\theta}\in \Theta} (L(\bm{\theta}) + c)  = \argmin_{\bm{\theta}\in \Theta} L(\bm{\theta})$ for $\forall c\in \sR$, implying that $L + c$ should be treated the same as $L$. Therefore, to account for the shift-invariance, we define $q:\gL(\Theta, \sR)\to \sR_+$ as
\begin{align}
    q(L):= \inf_{c\in \sR} \ \|L+c\|_\dagger, \qquad \forall L\in \gL(\Theta, \sR). \label{eq:q}
\end{align}
Note that \textit{ we abuse the notation} a bit, \textit{i.e.}, the $c$ in $L+c$ should be the constant function that maps every $\bm{\theta}\in \Theta$ to $c$. The above definition has some nice properties that make it a good difference measurement of two loss functions, as proved in proposition~\ref{prop:q} in the appendix. In particular, $q(\cdot)$ is a shift-invariant seminorm, i.e., it satisfies \textit{the triangle inequality} and $q(L+c)=q(L)$ for any constant $c$. Therefore, we can formulate the generalized batch active learning problem as the following sparse approximation problem.%

\begin{problem}[Batch Active Learning from the Perspective of Sparse Approximation] \label{prob:gbal} Given the shift-invariant seminorm $q$ induced by the norm $\|\cdot\|_\dagger$ (\eqref{eq:q}), and a label estimation distribution $\rP$, the generalized batch active learning problem (\eqref{eq:L-rv-approx}) is formally defined as
\begin{align}
    \argmin_{\vw\in \sR^{n_u}_+} \quad  \E_{\rP} [q( \tilde{L} - \tilde{L}_{\vw} )] \qquad \text{s.t.} \qquad  \|\vw\|_0 = b, \label{eq:approx-1}
\end{align}
where $\E_{\rP}$ stands for the expectation over $\tilde{\vy}_j\sim \rP(\vx_j)$ for $\forall j\in [n_u]$.
\end{problem}
Problem~\ref{prob:gbal} offers a general framework for batch active learning and can be applied with various settings, \textit{i.e.}, both the norm $\|\cdot\|_\dagger$ and the individual loss function $\ell$ can be chosen based on specific problems and applications. In the next section, we introduce two practical realizations of \eqref{eq:approx-1} for Bayesian and non-Bayesian active learning respectively.

\section{Sparse Approximation as Finite-dimensional Optimization}\label{sec:finite}
The approximation problem in~\eqref{eq:approx-1} is intractable. Therefore, we propose to transform it into a finite-dimensional sparse optimization problem. We address the issue regarding the sampling of $\E_\rP$, and discuss some concrete choices of $\rP$ and $\|\cdot\|_\dagger$ that lead to a solvable optimization function.

\paragraph{Addressing the Sampling Issue}
In \eqref{eq:approx-1}, the expectation  $\E_\rP$ is taken over the product space of $(\tilde{\vy}_1,\dots,\tilde{\vy}_{n_u})$ and each sample has to be remembered for future optimization, which can be intractable for large datasets. However, it has an upper bound where the complexity of the optimization is independent of the number of samples from $\rP$. First, by the triangle inequality 
\begin{align}
    \E_{\rP} [q( \tilde{L} - \tilde{L}_{\vw} )] &= \E_{\rP} [q( \tilde{L} - \E_{\rP}[\tilde{L}]+\E_{\rP}[\tilde{L}] - \E_{\rP}[\tilde{L}_{\vw}] + \E_{\rP}[\tilde{L}_{\vw}]  -\tilde{L}_{\vw} )]\\
    &\leq \underbrace{ \E_{\rP} [q( \tilde{L} - \E_{\rP}[\tilde{L}])] + \E_{\rP}[q( \tilde{L}_{\vw}-\E_{\rP}[\tilde{L}_{\vw}]  )]}_{(i): \text{ variance}}  + \underbrace{ q(\E_{\rP}[\tilde{L}] - \E_{\rP}[\tilde{L}_{\vw}])}_{(ii): \text{ approximation bias}}. \label{eq:ub}
\end{align}
We can see that it offers a trade-off between bias and variance, where the bias term is immediately tractable by expanding $\tilde{L},\tilde{L}_\vw$:
\begin{align}
    (ii)&=q(\E_{\rP}[\tfrac{1}{n_u}\textstyle\sum_{\vx_j \in \gD_u} \ell(\vx_j,\tilde{\vy}_j;\cdot)] - \E_{\rP}[\tfrac{1}{b}\textstyle\sum_{\vx_j \in \gD_u} w_j \ell(\vx_j,\tilde{\vy}_j;\cdot)])\\
    &=q( (\tfrac{1}{n_u}\textstyle\sum_{\vx_j \in \gD_u} \E_{\rP(\vx_j)}[\ell(\vx_j,\tilde{\vy}_j;\cdot)]) - (\tfrac{1}{b}\textstyle\sum_{\vx_j \in \gD_u} w_j \E_{\rP(\vx_j)}[\ell(\vx_j,\tilde{\vy}_j;\cdot)])). \label{eq:bias-1}
\end{align}
It remains to address the variance term \textit{(i)}. Recall that the more accurate $\rP$ is, the more accurate our approximation is. Given the decision $w_j>0$, if the label of $\vx_j$ is acquired, \textit{i.e.}, the oracle (human annotator) will offer us its true label $\vy_j^\star$, and the labeling distribution would be improved. That being said, the distribution of $\tilde{\vy}_j$ given $\vx_j$ and $w_j>0$ will be concentrated on its true label $\vy_j^\star$, \textit{i.e.},
\begin{align}
    \tilde{\vy}_j\sim \rP_\vw(\vx_j) := \begin{cases}
              \rP(\vx_j) \quad &\text{if $w_j=0$}\\
              \delta_{\vy_j^\star} \quad &\text{if $w_j>0$}
            \end{cases}, \qquad \text{where } \vw\in \sR^{n_u}_+\label{eq:Pw}
\end{align}
where $ \delta_{\vy_j^\star}$ denotes the distribution that $\tilde{\vy}_j$  can only be $\vy_j^\star$. However, the improved distribution $\rP_\vw$ is not known before the acquisition of the true labels $\vy^\star_j$ for $w_j>0$. Fortunately, although $\rP_\vw(\vx_j)$ is not known, it is known that the corresponding variance for $\tilde{\vy}_j$ would be zero no matter what its label is. Applying this trick, we show in the following proposition that the term \textit{(i)} with the improved label distribution $\rP_\vw$ has an upper bound that does not require to know the true labels.
\begin{proposition}\label{prop:var}
Let $\vw\in \sR^{n_u}_+$ and $\|\vw\|_0=b$, by replacing the $\rP$ by the improved estimation distribution $\rP_\vw$ (\eqref{eq:Pw}) into \textit{(i)} in \eqref{eq:ub}, we have 
\begin{align}
    \E_{\rP_\vw} [q( \tilde{L} - \E_{\rP_\vw}[\tilde{L}])] + \E_{\rP_\vw}[q( \tilde{L}_{\vw}-\E_{\rP_\vw}[\tilde{L}_{\vw}]  )] \leq \sum_{\vx_j \in \gD_u}  \bm{1}(w_j=0) \cdot \sigma_j,\label{eq:opt-var-1}
\end{align}
where $\sigma_j:=\frac{1}{n_u} \E_{  \rP(\vx_j)} [q(  \ell(\vx_j,\tilde{\vy}_j;\cdot) - \E_{ \rP(\vx_j)}[ \ell(\vx_j,\tilde{\vy}_j;\cdot)] )] $ is the individual variance, and $\bm{1}(\cdot)$ is the indicator function.
\end{proposition}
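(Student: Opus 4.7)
The plan is to exploit the structure of $\rP_\vw$: for every index $j$ with $w_j>0$, the variable $\tilde{\vy}_j$ is deterministic (equal to $\vy_j^\star$), so any centered loss term $\ell(\vx_j,\tilde{\vy}_j;\cdot)-\E_{\rP_\vw(\vx_j)}[\ell(\vx_j,\tilde{\vy}_j;\cdot)]$ is identically the zero function in $\gL(\Theta,\sR)$. For the indices with $w_j=0$ the marginal under $\rP_\vw$ coincides with $\rP$, so the centered term is exactly the one defining $\sigma_j$. This separation of the sum by support of $\vw$ is what drives the inequality.

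First I would expand the first summand. By linearity of $\tilde{L}(\bm{\theta})=\tfrac{1}{n_u}\sum_{\vx_j\in\gD_u}\ell(\vx_j,\tilde{\vy}_j;\bm{\theta})$ and of expectation under the product distribution $\rP_\vw$,
\begin{align*}
\tilde{L}-\E_{\rP_\vw}[\tilde{L}] \;=\; \tfrac{1}{n_u}\sum_{\vx_j\in\gD_u}\bigl(\ell(\vx_j,\tilde{\vy}_j;\cdot)-\E_{\rP_\vw(\vx_j)}[\ell(\vx_j,\tilde{\vy}_j;\cdot)]\bigr),
\end{align*}
and the summands with $w_j>0$ vanish pointwise in $\bm{\theta}$ because $\tilde{\vy}_j=\vy_j^\star$ almost surely under $\rP_\vw(\vx_j)$. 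Applying the triangle inequality for the seminorm $q$ (Proposition~\ref{prop:q}) and then taking expectation, and using that $\rP_\vw(\vx_j)=\rP(\vx_j)$ when $w_j=0$, gives
\begin{align*}
\E_{\rP_\vw}\bigl[q(\tilde{L}-\E_{\rP_\vw}[\tilde{L}])\bigr] \;\le\; \sum_{\vx_j\in\gD_u}\bm{1}(w_j=0)\cdot\sigma_j.
\end{align*}

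Next I would handle the second summand similarly, but observe it is in fact zero: writing $\tilde{L}_\vw-\E_{\rP_\vw}[\tilde{L}_\vw]=\tfrac{1}{b}\sum_{\vx_j\in\gD_u}w_j(\ell(\vx_j,\tilde{\vy}_j;\cdot)-\E_{\rP_\vw(\vx_j)}[\ell(\vx_j,\tilde{\vy}_j;\cdot)])$, every summand vanishes: those with $w_j=0$ by the coefficient, and those with $w_j>0$ because $\tilde{\vy}_j$ is degenerate under $\rP_\vw$. Hence $q(\tilde{L}_\vw-\E_{\rP_\vw}[\tilde{L}_\vw])=q(0)=0$ a.s., so this term contributes nothing. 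Adding the two bounds yields the claim.

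The only subtle step is justifying that the ``centered'' function is literally the zero element of $\gL(\Theta,\sR)$ (rather than merely having zero norm), so that $q$ evaluates to $0$; this follows because for a degenerate $\tilde{\vy}_j$ the random function $\ell(\vx_j,\tilde{\vy}_j;\cdot)$ equals its mean as a function of $\bm{\theta}$, pointwise. Everything else is linearity of expectation, the triangle inequality, and the indicator rewriting of the summation over $\{j:w_j=0\}$; no additional regularity on $\ell$ beyond membership in $\gL(\Theta,\sR)$ is needed.
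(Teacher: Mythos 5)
Your proof is correct and follows essentially the same route as the paper's: both arguments rest on the observation that under $\rP_\vw$ the centered loss terms with $w_j>0$ are identically zero (degenerate label distribution), combine this with the triangle inequality and absolute homogeneity of $q$, and use $\rP_\vw(\vx_j)=\rP(\vx_j)$ for $w_j=0$ to recover $\sigma_j$. The only cosmetic difference is the order in which the two summands are treated, and that you discard the vanishing terms before applying the triangle inequality while the paper does so after; your remark that the centered function is literally the zero element (not merely zero-norm) is a welcome extra precision.
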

Therefore, combining \eqref{eq:bias-1} and \eqref{eq:opt-var-1}, we have a more tractable form of the sparse approximation, \textit{i.e.},
\begin{align}
    \argmin_{\vw\in \sR^{n_u}_+} \quad  q(\E_{\rP}[\tilde{L}] - \E_{\rP}[\tilde{L}_{\vw}]) + \sum_{\vx_j \in \gD_u} \bm{1}(w_j=0) \cdot \sigma_j \qquad \text{s.t.} \qquad  \|\vw\|_0 = b, \label{eq:approx-2}
\end{align}
Intuitively, such decomposition of \textit{bias} and \textit{variance} naturally provides metrics of \textit{uncertainty} and \textit{representation} for active learning, where the variance itself is a metric of uncertainty, meanwhile the bias term measures how well a subset of selected data can represent the whole unlabeled data. Now, it remains to specify the choice of $\|\cdot\|_\dagger$, \textit{i.e.}, the norm that induces $q$ (\eqref{eq:q}).

\paragraph{Formulation of the Finite-Dimensional Optimization}

We consider two concrete choices of the $\|\cdot\|_\dagger$ for Bayesian and non-Bayesian settings respectively.
\begin{enumerate}[leftmargin=*,itemsep=-0.5mm]
    \item In the Bayesian setting, we can easily sample $\bm{\theta}_i\sim \pi:=p(\bm{\theta}\mid \gD_l)$ from the posterior. Utilizing the posterior, we make the norm $\|\cdot\|_\dagger$ more concrete by considering the $L^2(\pi)$-norm, \textit{i.e.}, $\|L\|^2_{\pi} = \E_{\bm{\theta} \sim \pi} [L(\bm{\theta})^2]$. Accordingly,
\begin{align}
    q(L)^2=\inf_{c\in \sR} \|L+c\|^2_{\pi}=\inf_{c\in \sR} \E_{\bm{\theta} \sim \pi} [(L(\bm{\theta})+c)^2]= \E_{\bm{\theta} \sim \pi} [(L(\bm{\theta})-\E_{\bm{\theta} \sim \pi}[L(\bm{\theta})])^2]. \label{eq:L-pi}
\end{align}
The posterior $\pi$ tells us where and how to evaluate the ``magnitude'' of $L$.
  Noting that \eqref{eq:L-pi} is in the form of an expectation, we can draw $m$ samples $\bm{\theta}_i\sim\pi$ to approximate it. Denote $\vg:=\tfrac{1}{\sqrt{m}}[\dots, (L(\bm{\theta}_i)-\bar L),\dots]_{i=1\dots m}^\top\in \sR^m$ where $\bar L:= \tfrac{1}{m}\textstyle\sum_{i=1}^m L(\bm{\theta}_i)$. \eqref{eq:L-pi} becomes
    \begin{align}
        q(L)^2 \ \approx \  \frac{1}{m}\sum_{i=1}^m (L(\bm{\theta}_i)-\bar L)^2 = \|\vg\|^2_2,  \label{eq:norm-pi}
    \end{align}
    where $ \|\vg\|_2$ is simply the Euclidean norm of the $m$-dimensional vector $\vg$. \\
    
    \item In contrast, non-Bayesian neural networks provide a point estimate rather than a posterior distribution. Therefore, we evaluate the loss function in a local ``window'' based on the current model. We consider the $\|\cdot\|_\infty$-norm over a Euclidean ball $\gB_{r}(\bm{\theta}_0)$ of radius $r$ centered at the current model parameter $\bm{\theta}_0$, \textit{i.e.}, $\|L\|_\infty=\max_{\bm{\theta}\in \gB_{r}(\bm{\theta}_0)}|L(\bm{\theta})|$. Moreover, in the Euclidean ball we approximate $L(\bm{\theta}) \approx L(\bm{\theta}_0) + \nabla L(\bm{\theta}_0)^\top (\bm{\theta}-\bm{\theta}_0)$. Therefore, we have
      \begin{align}
        q(L) &= \inf_{c\in \sR} \|L+c\|_\infty=\inf_{c\in \sR} \max_{\bm{\theta}\in \gB_{r}(\bm{\theta}_0)}|L(\bm{\theta})+c|
        \approx 
        r\|\nabla L(\bm{\theta}_0)\|_2. \label{eq:norm-grad}
    \end{align}
    Note that $\|\nabla L(\bm{\theta}_0)\|_2$ is the Euclidean norm of the gradient vector $\nabla L(\bm{\theta}_0)\in \sR^m$.
\end{enumerate}

The label distribution $\rP(\vx_j)=p(\tilde{y}_j\mid \vx_j, \gD_l)$ can be directly estimated from posteriors' predictive distribution on Bayesian neural networks. For non-Bayesian models, one could utilize the calibrated model prediction~\citep{guo2017calibration} as the label distribution. Finally, plugging either of the two  approximations of $q(L)$ into \eqref{eq:approx-2}, squaring all of the terms for the ease of optimization,  and adding a regularization term, we can formulate the sparse approximation problem as the following finite-dimensional optimization problem. Detailed derivation of \eqref{eq:opt-2} is deferred to Appendix~\ref{sec:detail-prob-opt}.
\begin{problem}[Sparse Approximation as Finite-dimensional Optimization]\label{prob:opt}
The finite-dimensional optimization for generalized batch active learning is
\begin{align}
    \argmin_{\vw\in \sR^{n_u}_+}\quad  \|\vv - \Phi \vw\|^2_2 - \alpha \sum_{\vx_j \in \gD_u} \bm{1}(w_j>0) \cdot \sigma_j^2 + \beta  \|\vw-\bm{1}\|_2^2 \quad \text{s.t.} \quad \|\vw\|_0 = b. \label{eq:opt-2}
\end{align}
where $\alpha>0$ is to offer a trade-off between bias and variance, and $\beta$ term is a regularizer.
Moreover, $\vv:=\frac{1}{n_u} \sum_{j=1}^{n_u} \E_{\rP(\vx_j)}[ \vg_j(\tilde{\vy}_j)]$, $ \Phi:=\frac{1}{b}(\E_{\rP(\vx_1)}[ \vg_1(\tilde{\vy}_1)],\dots,\E_{\rP(\vx_{n_u})}[ \vg_{n_u}(\tilde{\vy}_{n_u})])$, and $\sigma_j=\frac{1}{n_u} \E_{\rP(\vx_j)}[\|\vg_j(\tilde{\vy}_j)- \E_{\rP(\vx_j)}[\vg_j(\tilde{\vy}_j)]\|_2]$, where
\begin{align}
    &\vg_j(\tilde{\vy}_j):=\begin{cases}
    [\dots,(\ell(\vx_j, \tilde{\vy}_j; \bm{\theta}_i)-\bar \ell),\dots]_{i=1\dots m}^\top, \quad \bar \ell := \frac{1}{m}\sum_{i=1}^m \ell(\vx_j, \tilde{\vy}_j;  \bm{\theta}_i) \quad &\text{ if use } (\ref{eq:norm-pi})\\
    \nabla \ell(\vx_j, \tilde{\vy}_j; \bm{\theta}_0) \quad   &\text{ if use } (\ref{eq:norm-grad}). 
    \end{cases}%
\end{align}
\end{problem}

In the next section, we propose two optimization algorithms for \eqref{eq:opt-2} by exploiting its unique properties. The overall procedure in practice is presented in Algorithm~\ref{alg:sabal} Appendix~\ref{sec:big-algorithm}.

\section{Optimization Algorithms}\label{sec:method}
This section focuses on optimizing Problem~\ref{prob:opt}. Rewrite {the objective function} of
equation \ref{eq:opt-2} as $ f(\bm{w}):=f_1(\bm{w})+ f_2(\bm{w})$, where
    $f_1(\bm{w}):= \|\vv - \Phi \vw\|^2_2 + \beta  \|\vw-\bm{1}\|_2^2$ and  $f_2(\bm{w}):= - \alpha \textstyle\sum_{\vx_j \in \gD_u} \bm{1}(w_j>0) \cdot \sigma_j^2$.
The optimization has two major difficulties, \textit{i.e.}, the nonconvex sparsity constraint $\|\vw\|_0=b$ and the discontinuous objective function $f_2$.  When it comes to sparsity-constrained optimization, there are two schemes that are widely considered --- greedy~\citep{nemhauser1978analysis, campbell2019automated} 
and proximal iterative hard thresholding (IHT)~\citep{zhang2021bayesian, khanna2018iht}. However, Problem~\ref{prob:opt} introduces the new difficulty other than the sparsity constraint, \textit{i.e.}, the discontinuous component $f_2$, which violate the assumptions of many of these methods which require the use of gradient.
Instead, we propose Algorithm~\ref{alg:gdy}\&\ref{alg:iht} specifically for Problem~\ref{prob:opt} under the two schemes respectively, while incorporating the discontinuity.

We introduce some notations used in this section. Given a vector $\vg$, we denote $[\vg]_+$ as $\vg$ with its negative elements set to $0$. For an index $j$, we denote $g_j$ or $(\vg)_j$ to be its $j^{th}$ element. For an index set $\gS$, we denote $[\vg]_{\gS}$ to be the vector where $([\vg]_{\gS})_j=(\vg)_j$ if $j\in \gS$ and $([\vg]_{\gS})_j=0$ if $j\notin \gS$. Moreover, we denote $\ve^j$ to be the unit vector where $(\ve^j)_j=1$ and $(\ve^j)_i=0$ for $\forall i\neq j$.

Although the two algorithms use different schemes, they share the same two sub-procedures: a line search and de-bias step (Algorithm~\ref{alg:ls} and~\ref{alg:db} in Appendix~\ref{sec:appendix-alg}), which significantly improve the optimization performance~\citep{zhang2021bayesian}. The line search sub-procedure optimally solves the problem $\argmin_{\mu\in \sR} \ f_1(\vw - \mu \vu)$, \textit{i.e.}, given a direction $\vu$, what is the best step size $\mu$ to move the $\vw$ along $\vu$. The de-bias sub-procedure adjusts a sparse $\vw$ in its own sparse support for a better solution.

\begin{wrapfigure}{R}{0.52\textwidth}
\vspace{-1.4em}
\begin{minipage}{0.52\textwidth}
\IncMargin{1.5em}
\begin{algorithm}[H]
\SetAlgoLined
\setcounter{AlgoLine}{0}
\Indm
\KwPara{sparsity $b$; step size $\tau$.}
\Indp

$\vw \leftarrow \bm{0}$; \ $\gS \leftarrow \emptyset$

\Repeat{$|\gS|=b$}{
  $j \leftarrow \underset{j\in [n_u] \backslash \gS}{\argmin} \ \tau (\nabla f_1(\vw))_j - \alpha \sigma_j^2$ 
  
  $\gS \leftarrow \gS \cup \{j\}$ \hfill {\small \textit{(update selection)}}
  
  $\mu \leftarrow$ LineSearch($\ve^j, \vw$) 
  
  $\vw\leftarrow$ De-bias($\vw-\mu \ve^j$) 
  
  $w_j\leftarrow 0$ for $\forall w_j<0$ \hfill {\small \textit{($\vw \in \sR^{n_u}_+$)}}

 }

\Indm
\KwReturn{$\vw$}
\Indp
 \caption{Greedy Algorithm}\label{alg:gdy}
\end{algorithm}
\DecMargin{1.5em}
\end{minipage}
\hfill
\begin{minipage}{0.52\textwidth}
\IncMargin{1.5em}
\begin{algorithm}[H]
\SetAlgoLined
\setcounter{AlgoLine}{0}
\Indm
\KwPara{sparsity $b$; number of iterations $T$.}
\Indp

$\vw \leftarrow \bm{0}$; \ $\vz\leftarrow \bm{0}$

\Repeat{$T$ iterations}{

$\vw'\leftarrow \vw$ \hfill {\small \textit{(save previous $\vw $)}}

$\mu \leftarrow$ LineSearch($\nabla f_1(\vz), \vz$)

$\vs\leftarrow \vz-\mu \nabla f_1(\vz)$ \hfill {\small \textit{(gradient descent)}}

$\vw \leftarrow \underset{\vw \in \sR^{n_u}_+, \|\vw\|_0\leq b}{\argmin} \tfrac{1}{2}\|\bm{w}-\bm{s}\|_2^2 + f_2(\bm{w})$ 

$\vw\leftarrow$ De-bias($\vw$)
  
$w_j\leftarrow 0$ for $\forall w_j<0$  \hfill {\small \textit{($\vw \in \sR^{n_u}_+$)}}

$\tau \leftarrow $ LineSearch($\vw - \vw', \vw$)

$\vz \leftarrow \vw - \tau (\vw-\vw')$  \hfill {\small \textit{(momentum)}}

}

\Indm
\KwReturn{$\vw$}
\Indp
 \caption{Proximal-IHT Algorithm}\label{alg:iht}
\end{algorithm}
\DecMargin{1.5em}
\end{minipage}
\vspace{-1em}
\end{wrapfigure}

\vspace{-0.5em}
\paragraph{Opt. Algorithm: Greedy} The core idea of the greedy approach is noted in line~3 Algorithm~\ref{alg:gdy}, where it chooses an index $j$ to move a step of size $\tau$ that minimizes the objective, \textit{i.e.},
$
    j\leftarrow \argmin_{j\in [n_u]\backslash\gS} \quad (f_1(\vw+\tau\ve^j)-f_1(\vw)) + (f_2(\vw+\tau\ve^j)-f_2(\vw)).
$
By approximating $f_1(\vw+\tau\ve^j)-f_1(\vw)$ by its first-order approximation $\langle \nabla f_1(\vw), \tau \ve^j\rangle$, and noting that $f_2(\vw+\tau\ve^j)-f_2(\vw)=-\alpha \sigma_j^2$, we have the greedy step (line~3) in Algorithm~\ref{alg:gdy}. After choosing the index $j$ to include, line~5 chooses an optimal step to move, followed by a de-bias step that further improves the solution in the current sparse support $\supp(\vw)$. 

\vspace{-0.5em}
\paragraph{Opt. Algorithm: Proximal iterative hard thresholding}
The core idea of the proximal IHT (Algorithm~\ref{alg:iht}) is noted in line~6, where it combines both the hard thresholding and the proximal operator. It minimizes the discontinuous $f_2$ in a neighbourhood of the solution $\vs$ obtained by minimizing $f_1$, while satisfying the constraints. As discussed in the section~\ref{sec:appendix-alg}, the inner optimization (line 6) can be done optimally by simply picking the top-$b$ elements from $n_u$ elements. 
After this core step, a de-bias step improves the solution $\vw$ within its sparse support, followed by a momentum step.

\vspace{-0.5em}
\paragraph{Complexity Analysis} %
We analyze time complexities of the proposed algorithms with respect to the number of data samples $n$ and the queried batch size $b$. Except for line~6 Algorithm~\ref{alg:iht}, all steps are of time complexity $O(n)$. The line~6 Algorithm~\ref{alg:iht} is finding the $b$ smallest elements, which can be done in $O(n\log(b))$. Therefore, the time complexity for the greedy algorithm is $O(nb)$, and the time complexity for the proximal IHT is $O(n\log(b))$. Compared to the time complexity $O(nb^2)$ of the state-of-the-art method BADGE~\citep{ash2019deep}, the two proposed algorithms can be much faster, especially with a large batch size $b$ in practice.

\section{Experiment results}\label{sec:exp}
We demonstrate the performance of our batch active learning framework on image classification tasks, and show its flexibility by first using Bayesian neural networks and then general convolutional neural networks. Besides, we show that our method has runtime advantages compared to other methods in the literature. Finally, we conduct an ablation study on the trade-offs between uncertainty and representation in Appendix~\ref{sec:exp-ablation} and describe training details in Appendix~\ref{sec:exp-2}.

Each experiment has a fixed training, validation, and testing set. The model is initially trained on small amounts of labeled data randomly selected from the training set and then the algorithm iteratively performs the data acquisition and annotation. The model is reinitialized and retrained at the beginning of each active learning iteration. After the model is well trained, its test accuracy is evaluated on the testing set as a performance measure. All experiments are repeated multiple times using 5 random seeds (3 for the small model LeNet-5~\citep{lecun2015lenet}), and the results are reported as mean and standard deviations. The performance of each iteration is shown in learning curve plots. To better visualize the overall performance of AL methods, We also measure the area under curve (AUC) scores of the learning curve of different AL methods across different datasets.

We implement proximal IHT and greedy as two optimizations for sparse approximation, denoted as \textit{Ours-IHT} and \textit{Ours-Greedy}. We compare with the baselines: \textbf{(1) Random}: A naive baseline that selects a batch uniformly at random. \textbf{(2) BALD}~\citep{houlsby2011bayesian}: An uncertainty-based Bayesian method that selects a batch of data with maximum mutual information between model parameters and predictions. \textbf{(3) Batch BALD}~\citep{kirsch2019batchbald}: A Bayesian method that extends BALD to estimate the mutual information between a joint of multiple data points and model parameters. \textbf{(4) Bayesian Coreset}~\citep{pinsler2019bayesian}: A Bayesian approach based on the Bayesian Coreset problem~\citep{huggins2016coresets,campbell2019automated}. \textbf{(5) Entropy}~\citep{wang2014new}: An uncertainty-based non-Bayesian method that selects a batch of data with maximum entropy of the model predictions $\mathbb{H}\left(\boldsymbol{y}_{i} \mid \boldsymbol{x}_{i}; \boldsymbol{\theta}\right)$. \textbf{(6) KCenter}~\citep{sener2017active}: A representation-based non-Bayesian method that reformulates the coreset selection as a KCenter problem in the feature embedding space. \textbf{(6) BADGE}~\citep{ash2019deep}: A hybrid non-Bayesian method that samples a diverse batch of data using the $k$-MEANS++ seeding algorithm.

\paragraph{Bayesian Active Learning}
We first perform Bayesian active learning with Bayesian neural networks on Fashion MNIST~\citep{lecun1998gradient}, CIFAR-10~\citep{krizhevsky2009learning}, and CIFAR-100~\citep{krizhevsky2009learning}. For a fair comparison, we keep the same experiment settings of~\citet{pinsler2019bayesian}, using a Bayesian neural network consisting of a ResNet-18~\citep{ he2016deep} feature extractor. The posterior inference is obtained by variational inference~\citep{wainwright2008graphical, blundell2015weight} at the last layer, and the model predictive posteriors $p(\tilde \vy_j \mid \vx_j, \gD_l)$ are estimated using 100 samples. Equation~\ref{eq:norm-pi} is used to solve the finite-dimensional optimization problem, because sampling from the posterior distribution in a Bayesian neural network will be efficient by leveraging the local reparameterization trick~\citep{kingma2015variational}. It can be seen in Table~\ref{table: AUC_Bayesian} and Figure~\ref{fig:bayesian} that both proximal IHT and greedy show some advantages on Fashion MNIST dataset. On CIFAR-10 and CIFAR-100, we find greedy performs better than proximal IHT while outperforming other baselines, including the Bayesian Coreset~\citep{pinsler2019bayesian}. We also notice that the performance of Batch BALD~\citep{kirsch2019batchbald} does not meet our expectations and even performs below BALD~\citep{houlsby2011bayesian}, except on CIFAR-100~\citep{krizhevsky2009learning} where it outperforms other methods. The possible explanation is that the original paper \citep{kirsch2019batchbald} uses tiny batch sizes, e.g., 5, 10, or 40, instead of larger batch sizes, like 1000 or 5000, which are more typical in batch active learning.

\begin{table}[h!]
\scriptsize
  \begin{center}
  \caption{AUC Score ($\pm$ std.) for different AL methods on Bayesian active learning. AUC measures the overall performance improvement across number of queries. Top two scores are \textbf{highlighted}.}
  \begin{tabular}{lllllllll}
    \textbf{Dataset} & \textbf{Bayesian Coreset} & \textbf{Ours-Greedy} & \textbf{Ours-IHT} & \textbf{BALD} & \textbf{Batch BALD} & \textbf{Random} \\
    \toprule
    Fashion MNIST & $89.53 \pm 0.25$ & $\mathbf{89.83 \pm 0.26}$ & $\mathbf{89.97 \pm 0.23}$ & $89.72 \pm 0.23$ & $88.85 \pm 0.22$ & $88.39 \pm 0.32$  \\
    CIFAR10 & $\mathbf{77.73 \pm 0.70}$ & $\mathbf{78.28 \pm 0.53}$ & $77.55 \pm 0.83$ & $77.61 \pm 0.59$ & $76.12 \pm 0.35$ & $76.36 \pm 0.59$  \\
    CIFAR100 & $42.40\pm 0.29$ & $\mathbf{42.53 \pm 0.34}$ & $42.30 \pm0.29 $ & $41.99 \pm 0.60$ & $\mathbf{42.63 \pm 0.40}$ & $42.10 \pm 0.24$
  \end{tabular}
  \label{table: AUC_Bayesian}
  \end{center}
\end{table}
\vspace{-2em}
\begin{figure}[htp]
  \centering
  \includegraphics[width=1\linewidth]{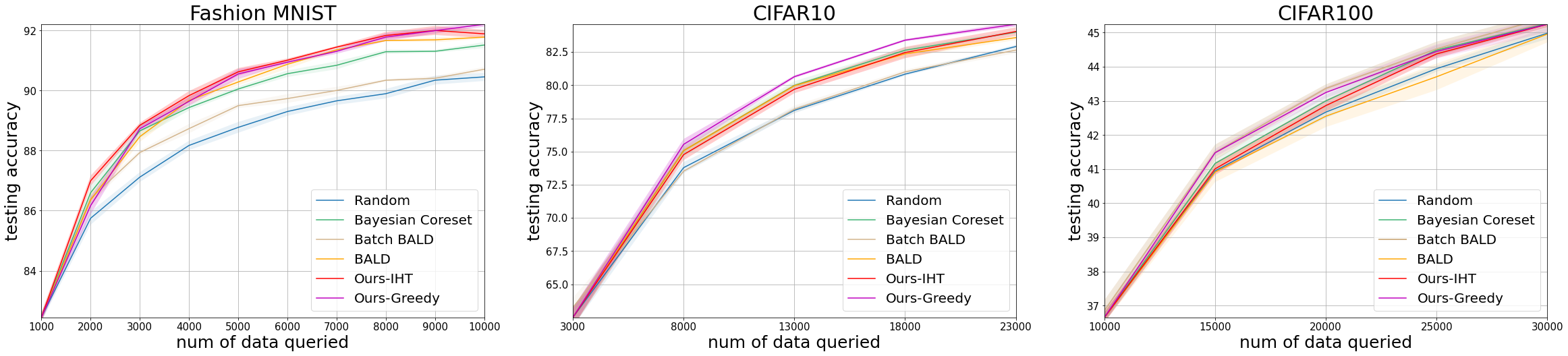}
  \caption{Active learning results on Bayesian models. Solid lines and shaded areas represent means and standard deviations of test accuracy over different seeds. Our method especially the greedy algorithm outperforms most baselines including the Bayesian Coreset~\citep{pinsler2019bayesian}.} %
  \label{fig:bayesian}
\end{figure}

\paragraph{General Active Learning}
We then implement our experiments on general convolutional neural networks, including LeNet-5~\citep{lecun2015lenet} and VGG-16~\citep{simonyan2014very} architectures without any Bayesian layers, using MNIST~\citep{lecun1998gradient}, SVHN~\citep{netzer2011reading}, and CIFAR-10~\citep{krizhevsky2009learning} datasets. We utilize calibrated prediction of current model with temperature scaling~\citep{guo2017calibration} to approximate the label distribution $p(\tilde \vy_j \mid \vx_j, \gD_l)$. Because the gradient of the last layer represents the full-model gradient~\citep{ash2019deep}, we can easily solve the optimization problem with gradient embedding as equation~\ref{eq:norm-grad}. We compare popular non-Bayesian baselines: Random, Entropy, KCenter, and BADGE. The results are shown in Table~\ref{table: AUC_general} and Figure~\ref{fig:non_bayesian}. In general, our method outperforms most baselines, achieving comparable performance to the strong baseline BADGE, but requires much less acquisition time, especially on large models. In addition, most methods perform similarly on CIFAR-10, and we conjecture that for CIFAR-10 each sample is informative enough; thus, random selection can achieve good enough performance.
\vspace{-1em}
\begin{table}[h!]
\scriptsize
  \begin{center}
  \caption{AUC Score ($\pm$ std.) for different AL methods on general active learning. AUC measures the overall performance improvement across number of queries. Top two scores are \textbf{highlighted}.}
  \begin{tabular}{lllllllll}
    \textbf{Dataset} & \textbf{BADGE} & \textbf{Ours-Greedy} & \textbf{Ours-IHT} & \textbf{KCenter} & \textbf{Entropy} & \textbf{Random} \\
    \toprule
    MNIST & $\mathbf{91.24 \pm 0.48}$ & $90.89 \pm 0.38$ & $\mathbf{91.07 \pm 0.45}$ & $89.57 \pm 1.02$ & $90.68 \pm 0.81$ & $86.48 \pm 1.11$ \\
    SVHN & $86.92 \pm 0.71$ & $\mathbf{87.23 \pm 0.47}$ & $86.84 \pm 0.57$ & $\mathbf{87.04 \pm 0.80}$ & $86.28 \pm 1.05$ & $85.52 \pm 0.51$ \\
    CIFAR10 &$\mathbf{68.20 \pm 0.56}$ & $\mathbf{68.01\pm 0.66}$ & $67.80 \pm 0.69$ & $67.98 \pm 0.63$ & $67.94 \pm 0.64$ & $67.05 \pm 0.59 $
  \end{tabular}
  \label{table: AUC_general}
  \end{center}
  \vspace{-3em}
\end{table}
\begin{figure}[htp]
  \centering
  \includegraphics[width=1\linewidth]{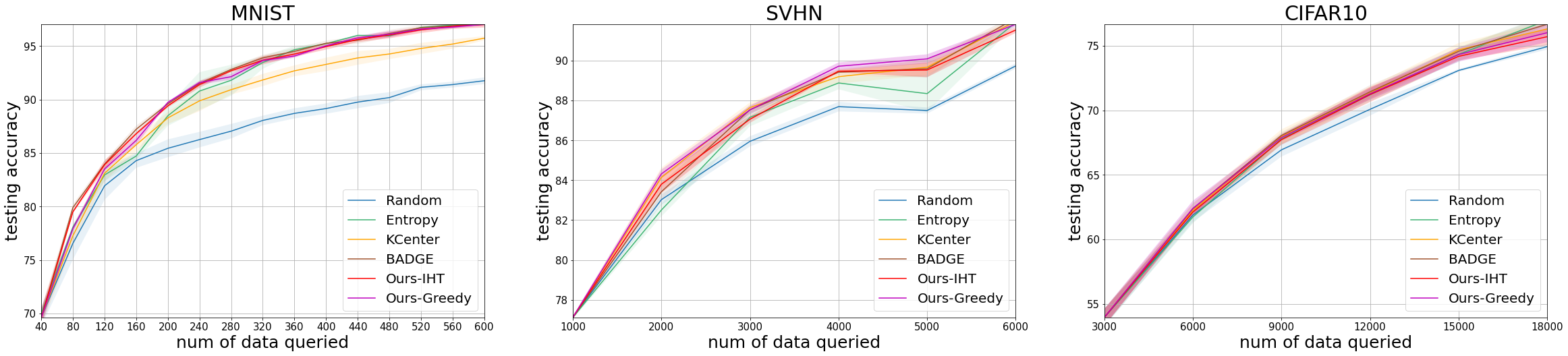}
  \caption{Active learning results on general (non-Bayesian) models. Solid lines and shaded areas represent means and standard deviations of test accuracy over different seeds.}
  \label{fig:non_bayesian}
\end{figure}
\vspace{-1em}
\paragraph{Run Time Comparison}
To show that our methods have competitive performance with less acquisition time, we compare the empirical runtime complexity with other baselines in non-Bayesian active learning. Here, we consider the acquisition time of the first query, where the unlabeled data pool has the largest size compared with later query iterations. Large models and datasets (SVHN and CIFAR-10 on VGG-16) are used to better illustrate the runtime complexity. Results are shown in Table~\ref{table: time}. It can be seen that our methods require much less runtime than BADGE especially when the queried batch is large, and even less than KCenter in most cases.
\begin{table}[h!]
\scriptsize
  \begin{center}
  \caption{first query's acquisition time ($\pm$ std.) of different AL methods on two large datasets. Our methods show big runtime advantages over other non-trivial methods.}
  \begin{tabular}{llllll}
    \textbf{Dataset} & \textbf{Method} & \textbf{Time (unit:s)} & \textbf{Dataset} & \textbf{Method} & \textbf{Time (unit:s)} \\
    \toprule
    SVHN & BADGE & $732.18 \pm 26.29$ & CIFAR10 & BADGE & \hspace{-1.5em}$1207.19 \pm 121.09$ \\
     & Ours-Greedy & $201.65 \pm 3.54$ & & Ours-Greedy & $333.67 \pm 3.53$ \\
     & Ours-IHT & $211.04 \pm 10.65$ & & Ours-IHT & $174.28 \pm 3.27$ \\
     & KCenter & $309.99 \pm 0.81$ & & KCenter & $258.29 \pm 1.75$ \\
     & Entropy & $\ \ 16.46 \pm 0.29$ & & Entropy & $\ \ 11.76 \pm 0.03$ \\
     & Random & $\ \ \ \ 0.81 \pm 0.02$ & & Random & $\ \ \ \ 1.42 \pm 0.02$
  \end{tabular}
  \label{table: time}
  \end{center}
\end{table}
\vspace{-0.5cm}

\section{Concluding Remarks}\label{sec:con}  
This work introduces a novel framework that generalizes batch active learning, a human-machine interactive learning mechanism, as a sparse approximation problem with a principled balance between representation and uncertainty. The central intuition is finding a \textit{weighted subset} from the unlabeled data pool whose corresponding training loss approximates the \textit{full-set} loss function in a function space. Specifically, we realize the framework as a finite-dimensional optimization problem, efficiently solvable by the greedy or proximal IHT algorithms, with the flexibility to adapt to both Bayesian and non-Bayesian settings. Experiments demonstrate strong performance with lower time complexity. 

In the future, we will consider an in-depth theoretical analysis on how to strike the best balance between the variance and bias controlled by hyperparameters. Besides, since statistical and analytic approximations seem to be inevitable for solving the problem in practice, we would also like to explore more realization and optimization techniques with more theoretical inspiration and, if possible, find precise bounds for these approximations.
\newpage
\section*{Acknowledgement}
Koyejo acknowledges partial funding from a Sloan Fellowship. This work was also funded in part by NSF 2046795, 1909577, 1934986 and NIFA award 2020-67021-32799.

\bibliography{refs}

\clearpage
\appendix
\begin{center}
\LARGE
\textbf{Batch Active Learning from the Perspective of\\Sparse Approximation}

\textbf{Appendix}
\end{center}

\section{The Overall Procedure}\label{sec:big-algorithm}
\IncMargin{1.5em}
\begin{algorithm}[htp]
\SetAlgoLined
\Indm  
\KwInput{Intial parameters $\boldsymbol{\theta}$, initial unlabeled pool $\mathcal{D}_{u}$, initial labeled pool $\mathcal{D}_{l} = \varnothing$, initial number of samples $b_{0}$, query batch size $b$, number of iterations $T$.}
\Indp
Query a random batch $\mathcal{S}_{0}$ of $b_{0}$ data from $\mathcal{D}_{l}$, update $\mathcal{D}_{u} \leftarrow \mathcal{D}_{u} \backslash \mathcal{S}_{0}$ and $\mathcal{D}_{l} \leftarrow \mathcal{D}_{l} \cup \mathcal{S}_{0}$.

Train the model using $\mathcal{S}_{0}$.

\For{$t = 1,2, \ldots, T$}{
    For each data $\boldsymbol{x}_j \in \mathcal{D}_{u}$, estimate its label distribution $\tilde{\vy}_j \sim \rP(\vx_j)$.
    
    Compute vector $\vg_j$ by sampling or gradient embedding using~\ref{eq:g-proj}.
    
    Compute $\boldsymbol{v}$, $\Phi$, and $\sigma_j$ for each $\boldsymbol{x}_j \in \mathcal{D}_{u}$ and form equation~\ref{eq:opt-2}. %

    Find sparse weight $\vw$ s.t. $\|\vw\|_{0}=b$ as specified in section~\ref{sec:method}.
    
    Select a batch of data $\mathcal{S}_{t} = \{ \boldsymbol{x}_j\in \mathcal{D}_{u} \mid \vw_j > 0\} $ and query their labels.
    
    Update $\mathcal{D}_{u} \leftarrow \mathcal{D}_{u} \backslash \mathcal{S}_{t}$ and $\mathcal{D}_{l} \leftarrow \mathcal{D}_{l} \cup \mathcal{S}_{t}$.
    
    Reinitialize and retrain the model using updated $\mathcal{D}_{l}$, update model parameters $\boldsymbol{\theta}$
}
\Indm  
\KwReturn{Final model parameters $\boldsymbol{\theta}$.}
\Indp
 \caption{Batch Active Learning from the Perspective of Sparse Approximation}\label{alg:sabal}
\end{algorithm}
\DecMargin{1.5em}

\section{Related Work}\label{sec:related}
Active learning has been widely studied by the machine learning community. As most classic approaches have already been discussed in a detail in~\citet{settles2009active, dasgupta2011two, hanneke2014theory}, we will briefly review some recent works in deep active learning. 

Existing query strategies can mainly be categorized as uncertainty-based and representation-based. Uncertainty-based approaches look for data samples the model is mostly uncertain about. Meanwhile, under the Bayesian setting, several recent works leverage the Bayesian neural network to well measure the model uncertainty. \citet{gal2015dropout, gal2016dropout} proves the Monte-Carlo dropout (MC Dropout) as an approximation of performing Bayesian inference, and enables efficient uncertainty estimations in neural networks. \citet{gal2017deep} utilizes MC Dropout for approximating posterior distributions and adapts \citet{houlsby2011bayesian} as their uncertainty based acquisition function, and similarly, \citet{kirsch2019batchbald} proposes a batch-mode approach based on \citet{gal2017deep} and shows some improvements through a more accurate measurement of mutual information between the data batch and model parameters.
While MC Dropout becomes prevalent for uncertainty estimation, \citet{beluch2018power} shows ensemble-based methods lead to better performance because of more calibrated uncertainty estimation, and another recent work~\citet{hemmer2020deal} also proposes a new uncertainty estimation method by replacing the softmax output of a neural network with the parameter of Dirichlet density. Other non-Bayesian approaches sometimes combine uncertainty estimation with other metrics: \citet{li2013adaptive} combines an information density measure to maximize the mutual information between selected samples and remaining unlabeled samples under the Gaussian Process setting.~\citet{wang2016cost} selects data based on several classic uncertainty metrics and incorporate a cost-efficient strategy by pseudo labeling the confident samples. 

Representation-based approaches attempt to query diverse data samples that could best represent the overall unlabeled dataset. A recent work proposed by~\citet{sener2017active} defines the active learning as a core-set selection problem. They derive an upper bound for the core-set loss and construct representative batches by solving a k-Center problem in the feature space. In~\citet{geifman2017deep}, the authors also explore the deep active learning with core-sets, but build the core-sets in the farthest-first compression scheme. 

In hybrid active learning literature, one of the state-of-art methods, BADGE~\citep{ash2019deep}, captures uncertainty through the lens of gradients, and samples diverse batches on the gradient embedding by the $k$-MEANS++ seeding algorithm. However, one of the downsides of BADGE is the high run-time complexity, as data acquisition speed is crucial in practice. \citep{sinha2019variational} train a Variational Autoencoder and a discriminator in an adversarial fashion. The discriminator predicts a sample as unlabeled based on its likelihood of representativeness, and a batch of samples with the lowest confidence will be queried. However, their adversarial method is difficult to apply to general and Bayesian neural networks. Our proposed method explicitly balances the trade-offs between uncertainty and representation by bias and variance decomposition.

Coreset selection is a common high-level idea in active learning, and methods vary in how one characterizes the closeness of a coreset to the full-set. \citep{sener2017active} characterizes the closeness as how much a coreset covers the full-set in the Euclidean distance in a feature space. It derives an upper bound for the coreset loss based on the Lipschitz continuity and transforms the original problem into a KCenter problem. However, their method relies on good feature representation, which is not always guaranteed in practice. \citep{pinsler2019bayesian} is mainly based on existing Bayesian inference literature, especially the Bayesian Coreset problem~\citep{campbell2019automated}. They characterize the closeness as how much the core-set log-posterior approximates the full-set log-posterior, with the log-posterior directly derived from the Bayes' rule. However, their problem formulation relies on the Bayesian setting and Bayesian models, and conducting posterior inference is non-trivial for non-Bayesian models. In contrast, our method characterizes the closeness in a more general sense, i.e., through a semi-norm function directly on the difference between the coreset loss function and the full-set loss function.

\section{Proofs}\label{sec:proof}
\begin{proposition}\label{prop:q}
$q:\gL(\Theta, \sR)\to \sR_+$ defined in (\ref{eq:q}) is a shift-invariant seminorm satisfying the following properties:
\begin{enumerate}
    \item $q(L_1+L_2)\leq q(L_1)+q(L_2)$ for $\forall L_1, L_2\in \gL(\Theta, \sR)$; \hfill  (triangle inequality)
    \item $q(cL)=|c|q(L)$ for $\forall L\in \gL(\Theta, \sR), \forall c\in \sR$; \hfill (absolute homogeneity)
    \item $q(L+c)=q(L)$ for $\forall L\in \gL(\Theta, \sR), \forall c\in \sR$; \hfill (shift-invariance)
    \item $q(L)=0$ if and only if $L$ maps every $\bm{\theta}\in \Theta$ to a constant.
\end{enumerate}
In other words, $q$ defines a norm in the space of shift-equivalence classes of loss functions.
\end{proposition}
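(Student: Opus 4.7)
\bigskip

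\noindent\textbf{Proof proposal.} All four items follow by reducing to the corresponding properties of the underlying norm $\|\cdot\|_\dagger$ together with properties of an infimum over $c\in\sR$. I would dispatch the three ``easy'' properties first and treat property~4 as the main obstacle. Throughout I would abuse notation as the paper does: a scalar $c\in\sR$ inside a norm means the constant function $\bm{\theta}\mapsto c$, which lies in $\gL(\Theta,\sR)$ by the standing assumption that the constant function is contained there.

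For the \emph{triangle inequality}, given $\eps>0$ pick $c_1,c_2\in\sR$ with $\|L_i+c_i\|_\dagger \le q(L_i)+\eps$. Then $\|L_1+L_2+(c_1+c_2)\|_\dagger \le \|L_1+c_1\|_\dagger+\|L_2+c_2\|_\dagger \le q(L_1)+q(L_2)+2\eps$, and the infimum on the left in $c\mapsto c_1+c_2$ gives $q(L_1+L_2)\le q(L_1)+q(L_2)+2\eps$; let $\eps\downarrow 0$. For \emph{absolute homogeneity}, the $c=0$ case is trivial, and for $c\neq 0$ the change of variables $c'\mapsto c''=c'/c$ in the infimum yields $q(cL)=\inf_{c'}\|cL+c'\|_\dagger = |c|\inf_{c''}\|L+c''\|_\dagger=|c|q(L)$. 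For \emph{shift-invariance}, the substitution $c''=c+c'$ in $\inf_{c'}\|L+c+c'\|_\dagger$ reindexes the same infimum, giving $q(L+c)=q(L)$.

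The interesting direction is property~4: $q(L)=0$ implies $L$ is constant. The ``if'' direction is immediate, since if $L\equiv k$ then taking $c=-k$ gives $\|L+c\|_\dagger=\|0\|_\dagger=0$. For the ``only if'' direction, suppose $q(L)=0$. Then there exists a sequence $\{c_n\}\subset\sR$ with $\|L+c_n\|_\dagger \to 0$. The plan is:
\begin{enumerate}
    \item Show $\{c_n\}$ is bounded. Let $\kappa:=\|1\|_\dagger>0$ (strictly positive since $1\not\equiv 0$ and $\|\cdot\|_\dagger$ is a norm). By the reverse triangle inequality, $\|L+c_n\|_\dagger \ge |c_n|\,\kappa-\|L\|_\dagger$, so $|c_n|\to\infty$ would force $\|L+c_n\|_\dagger\to\infty$, a contradiction.
    \item Extract a convergent subsequence $c_{n_k}\to c^\star\in\sR$ by Bolzano--Weierstrass.
    \item Use continuity of the norm: $\|L+c^\star\|_\dagger \le \|L+c_{n_k}\|_\dagger + |c^\star-c_{n_k}|\,\kappa \to 0$, hence $\|L+c^\star\|_\dagger=0$. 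Positive definiteness of $\|\cdot\|_\dagger$ gives $L=-c^\star$, which is constant.
\end{enumerate}

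The anticipated obstacle is step~1, because the statement does not assume the infimum in the definition of $q$ is attained, so one cannot simply grab a minimizer. Boundedness of $\{c_n\}$ is what rescues the argument, and it in turn hinges on the standing assumption that the constant function $1$ lies in $\gL(\Theta,\sR)$ so that $\|1\|_\dagger$ is a genuine positive constant. Once property~4 is in hand, the concluding sentence ``$q$ defines a norm on the quotient space $\gL(\Theta,\sR)/\!\sim$'' where $L_1\sim L_2$ iff $L_1-L_2$ is constant is a one-line consequence of properties 1--4.
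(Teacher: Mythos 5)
Your proof is correct, and for properties 1--3 it is essentially identical in substance to the paper's (the paper handles the triangle inequality by splitting the infimum over $c=c_1+c_2$ rather than via an $\epsilon$-argument, but these are the same computation). For property 4, the core idea is also shared --- both arguments observe that any near-minimizing sequence of constants is bounded via the reverse triangle inequality and then pass to a limit --- but your implementation is the more elementary one: you identify the constants with real numbers, invoke Bolzano--Weierstrass to extract $c_{n_k}\to c^\star$, and conclude by continuity of the norm and positive definiteness. The paper instead works inside the one-dimensional subspace $\gF$ of constant functions, argues that the relevant closed ball in $\gF$ is compact (citing Riesz's lemma, though the fact actually being used is that closed bounded sets in a finite-dimensional normed space are compact), and concludes that the limit $-L$ lies in $\gF$. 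Your version buys transparency and avoids the functional-analytic packaging, and it correctly isolates the one hypothesis both arguments silently rely on, namely that the constant function $1$ belongs to $\gL(\Theta,\sR)$ with $\|1\|_\dagger>0$. Either route is valid.
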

\begin{proof}
Recall that 
\begin{align}
    q(L):= \inf_{c\in \sR} \ \|L+c\|_\dagger, \qquad \forall L\in \gL(\Theta, \sR). 
\end{align}
We prove the four properties respectively in the following.
\begin{enumerate}
    \item The triangle inequality is inherited from the sub-additivity  of  the norm $\|\cdot\|_\dagger$. For $\forall L\in \gL(\Theta, \sR)$, we have
    \begin{align}
        q(L_1+L_2)&=\inf_{c\in \sR} \ \|L_1+L_2+c\|_\dagger= \inf_{c_1, c_2\in \sR} \ \|L_1+L_2+c_1+c_2\|_\dagger\\
        &\leq \inf_{c_1, c_2\in \sR} \|L_1+c_1\|_\dagger+\|L_2+c_2\|_\dagger\\
        &=(\inf_{c\in \sR} \|L_1+c\|_\dagger) + (\inf_{c\in \sR} \|L_2+c\|_\dagger)\\
        &=q(L_1)+q(L_2).
    \end{align}
    \item The absolute homogeneity is also inherited from the absolute homogeneity of  the norm $\|\cdot\|_\dagger$. The case for $c=0$ is obvious, and for $c\neq 0$ we have
    \begin{align}
        q(cL)=|c|q(L)&=\inf_{c_1\in \sR} \ \|cL+c_1\|_\dagger=\inf_{c_1\in \sR} \ |c|\cdot\|L+c_1/c\|_\dagger\\
        &=\inf_{c_2\in \sR} \ |c|\cdot\|L+c_2\|_\dagger =|c|q(L).
    \end{align}
    \item By the definition of $q(\cdot)$, we have the shift-invariance of $q(\cdot)$.
    \item The ``if'' part can be proved by definition, \textit{i.e.}, $q(c)=\inf_{c_1\in \sR} \ \|c_1+c\|_\dagger=\|0\|_\dagger=0$. 
    
    For the ``only if'' part, we need to be more rigorous by defining $f_c$ to be the function that maps $\Theta$ to $c\in \sR$. We further define $\gF:=\{f_c\mid c\in \sR\}\subset \gL(\Theta, \sR)$, and we can see $(\gF, \|\cdot\|_\dagger)$ is a one-dimensional normed space. Letting $L\in \gL(\Theta, \sR)$ and $q(L)=0$, we have 
    \begin{align}
        \inf_{f_{c_\epsilon}\in \gF} \ \|L+f_c\|_\dagger = 0.
    \end{align}
    Therefore, for $\forall \epsilon>0$, $\exists c_\epsilon\in \sR$ such that
    \begin{align}
        &\|L+f_{c_\epsilon}\|_\dagger \leq \epsilon\\
         \implies &\|f_{c_\epsilon}\|_\dagger = \|L+f_{c_\epsilon}-L\|_\dagger \leq \epsilon+\|L\|_\dagger.
    \end{align}
    That being said, for $0<\epsilon<1$, we have $\|f_{c_\epsilon}\|_\dagger\leq 1+\|L\|_\dagger$. Denote $\gB=\{f_c\in \gF \mid \|f_c\|_\dagger\leq 1+\|L\|_\dagger \}$, and we can see $\gB$ is a closed ball in $\gF$. As $\gF$ is one-dimensional, by Riesz's lemma we have $\gB$ compact. 
    
    As $\lim_{\epsilon\to 0} \|L+f_{c_\epsilon}\|_\dagger=0$, \textit{i.e.}, $f_{c_\epsilon}\to L$, by the compactness of $\gB$ we have $L\in \gB$. Therefore, $L$ is also a constant function. Note that this conclusion does not require $\gL(\Theta, \sR)$ to be complete.

\end{enumerate}
\end{proof}

\begin{proposition}[Proposition~\ref{prop:var} Restated]
As $\vw\in \sR^{n_u}_+$ and $\|\vw\|_0=b$, by replacing the $\rP$ by the improved estimation distribution $\rP_\vw$ (\eqref{eq:Pw})  into \textit{(i)} in \eqref{eq:ub}, we have 
\begin{align}
    \E_{\rP_\vw} [q( \tilde{L} - \E_{\rP_\vw}[\tilde{L}])] + \E_{\rP_\vw}[q( \tilde{L}_{\vw}-\E_{\rP_\vw}[\tilde{L}_{\vw}]  )] \leq \sum_{\vx_j \in \gD_u}  \bm{1}(w_j=0) \cdot \sigma_j,\label{eq:opt-var-1-appendix}
\end{align}
where  $\sigma_j:=\frac{1}{n_u} \E_{  \rP(\vx_j)} [q(  \ell(\vx_j,\tilde{\vy}_j;\cdot) - \E_{ \rP(\vx_j)}[ \ell(\vx_j,\tilde{\vy}_j;\cdot)] )] $  is the individual variance, and $\bm{1}(\cdot)$ is the indicator function. 
\end{proposition}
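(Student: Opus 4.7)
}

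The plan is to exploit a simple but crucial cancellation: under $\rP_\vw$, the coordinate $\tilde{\vy}_j$ is either a Dirac (when $w_j>0$) or drawn from $\rP(\vx_j)$ (when $w_j=0$), and the sum defining $\tilde{L}_\vw$ multiplies every such coordinate by $w_j$. So the two terms on the left-hand side are treated separately, and each collapses onto the indices where $w_j=0$.

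First, I would expand the centered quantities using the fact that $\E_{\rP_\vw}$ acts coordinate-wise (since $\rP_\vw = \prod_j \rP_\vw(\vx_j)$). For $\tilde{L}_\vw$ this gives
\begin{align*}
\tilde{L}_\vw - \E_{\rP_\vw}[\tilde{L}_\vw] \;=\; \tfrac{1}{b}\sum_{j \in [n_u]} w_j\bigl(\ell(\vx_j,\tilde{\vy}_j;\cdot) - \E_{\rP_\vw(\vx_j)}[\ell(\vx_j,\tilde{\vy}_j;\cdot)]\bigr).
\end{align*}
Each summand vanishes identically: either $w_j=0$ and the factor in front is zero, or $w_j>0$ in which case $\rP_\vw(\vx_j)=\delta_{\vy_j^\star}$ and the centered term is zero pointwise in $\bm{\theta}$. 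Hence $q(\tilde{L}_\vw - \E_{\rP_\vw}[\tilde{L}_\vw])=0$ almost surely under $\rP_\vw$, so the second term on the left-hand side of (\ref{eq:opt-var-1-appendix}) is exactly $0$.

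For $\tilde{L}$, the same expansion reads
\begin{align*}
\tilde{L} - \E_{\rP_\vw}[\tilde{L}] \;=\; \tfrac{1}{n_u}\sum_{j:\,w_j=0}\bigl(\ell(\vx_j,\tilde{\vy}_j;\cdot) - \E_{\rP(\vx_j)}[\ell(\vx_j,\tilde{\vy}_j;\cdot)]\bigr),
\end{align*}
because the indices with $w_j>0$ contribute deterministically to a constant whose difference with its own expectation is zero. Applying the triangle inequality and absolute homogeneity of $q$ (Proposition~\ref{prop:q}) yields
\begin{align*}
q\bigl(\tilde{L} - \E_{\rP_\vw}[\tilde{L}]\bigr) \;\le\; \tfrac{1}{n_u}\sum_{j:\,w_j=0} q\bigl(\ell(\vx_j,\tilde{\vy}_j;\cdot) - \E_{\rP(\vx_j)}[\ell(\vx_j,\tilde{\vy}_j;\cdot)]\bigr).
\end{align*}
Taking $\E_{\rP_\vw}$ on both sides, and noting that for $j$ with $w_j=0$ the marginal $\rP_\vw(\vx_j)$ coincides with $\rP(\vx_j)$, the right-hand side becomes exactly $\sum_{j} \bm{1}(w_j=0)\,\sigma_j$ by definition of $\sigma_j$. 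Adding the two contributions gives (\ref{eq:opt-var-1-appendix}).

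The only non-routine step is recognizing the cancellation for $\tilde{L}_\vw$, namely that the weight $w_j$ and the Dirac-vs-$\rP$ dichotomy of $\rP_\vw(\vx_j)$ are coupled so that every summand is zero under either branch; the rest is triangle inequality plus homogeneity of the seminorm $q$, both of which are available from Proposition~\ref{prop:q}. I expect no real obstacle beyond being careful that $q$ is applied to a fixed (random) element of $\gL(\Theta,\sR)$ before taking expectations, so that the seminorm inequalities can legitimately be combined with Fubini/linearity of expectation on the outside.
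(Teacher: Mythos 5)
Your proposal is correct and follows essentially the same route as the paper's proof: the $\tilde{L}_\vw$ term vanishes identically because each summand is killed either by $w_j=0$ or by the Dirac concentration of $\rP_\vw(\vx_j)$, and the $\tilde{L}$ term is bounded via the triangle inequality and absolute homogeneity of $q$, with the $w_j>0$ coordinates dropping out and the $w_j=0$ marginals reducing to $\rP(\vx_j)$ so that the bound matches $\sum_j \bm{1}(w_j=0)\,\sigma_j$. The only cosmetic difference is that you discard the deterministic $w_j>0$ summands before applying the triangle inequality, whereas the paper applies the triangle inequality first and then observes those expectations are zero.
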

\begin{proof}
Recall that
\begin{align}
    &\tilde{L}_{\vw}(\bm{\theta}):= \frac{1}{b}\sum_{\vx_j \in \gD_u} w_j \ell(\vx_j,\tilde{\vy}_j;\bm{\theta}), \quad  \quad &&\tilde{L}(\bm{\theta}):= \frac{1}{n_u}\sum_{\vx_j \in \gD_u} \ell(\vx_j,\tilde{\vy}_j;\bm{\theta}),\\
    & \tilde{\vy}_j\sim \rP_\vw(\vx_j) := \begin{cases}
              \rP(\vx_j) \quad &\text{if $w_j=0$}\\
              \delta_{\vy_j^\star} \quad &\text{if $w_j>0$}
            \end{cases},
            && \vw\in \sR^{n_u}_+,
\end{align}
where $ \delta_{\vy_j^\star}$ denotes the distribution that $\tilde{\vy}_j$  can only be $\vy_j^\star$. Therefore, by the definition of $\rP_\vw$, we have
\begin{align}
    \E_{\rP_\vw(\vx_j)} \left[q\left(  \ell(\vx_j,\tilde{\vy}_j;\cdot) - \E_{\rP_\vw(\vx_j)}[\ell(\vx_j,\tilde{\vy}_j;\cdot)]\right)\right]=0, \qquad \text{if $w_j>0$}.
\end{align}

Plugging the above definitions into $ \E_{\rP_\vw}[q( \tilde{L}_{\vw}-\E_{\rP_\vw}[\tilde{L}_{\vw}]  )]$ we have
\begin{align}
     \E_{\rP_\vw}[q( \tilde{L}_{\vw}-\E_{\rP_\vw}[\tilde{L}_{\vw}]  )]=\E_{\rP_\vw} \left[q\left( \frac{1}{n_u}\sum_{\vx_j \in \gD_u} w_j\left( \ell(\vx_j,\tilde{\vy}_j;\cdot) - \E_{\rP_\vw(\vx_j)}[\ell(\vx_j,\tilde{\vy}_j;\cdot)]\right)\right)\right] &\\
     =\E_{\rP_\vw} \left[q\left( \frac{1}{n_u}\sum_{\vx_j \in \gD_u} \bm{1}(w_j>0)  w_j\left( \ell(\vx_j,\tilde{\vy}_j;\cdot) - \E_{\rP_\vw(\vx_j)}[\ell(\vx_j,\tilde{\vy}_j;\cdot)]\right)\right)\right] 
     = 0.& \label{eq-app:1}
\end{align}
Therefore, we only need to care about the 
$\E_{\rP_\vw} [q( \tilde{L} - \E_{\rP_\vw}[\tilde{L}])]$. 
\begin{align}
    \E_{\rP_\vw} [q( \tilde{L} - \E_{\rP_\vw}[\tilde{L}])]&=\E_{\rP_\vw} \left[q\left( \frac{1}{n_u}\sum_{\vx_j \in \gD_u} \ell(\vx_j,\tilde{\vy}_j;\cdot) - \E_{\rP_\vw(\vx_j)}[\ell(\vx_j,\tilde{\vy}_j;\cdot)]\right)\right] \\
    &\leq \sum_{\vx_j \in \gD_u} \frac{1}{n_u}\E_{\rP_\vw(\vx_j)} \left[q\left(  \ell(\vx_j,\tilde{\vy}_j;\cdot) - \E_{\rP_\vw(\vx_j)}[\ell(\vx_j,\tilde{\vy}_j;\cdot)]\right)\right]\\
    &= \sum_{\vx_j \in \gD_u} \bm{1}(w_j=0) \frac{1}{n_u}\E_{\rP} \left[q\left(  \ell(\vx_j,\tilde{\vy}_j;\cdot) - \E_{\rP(\vx_j)}[\ell(\vx_j,\tilde{\vy}_j;\cdot)]\right)\right]\\
    &=\sum_{\vx_j \in \gD_u}  \bm{1}(w_j=0) \cdot \sigma_j, \label{eq-app:2}
\end{align}
where the inequality is by the triangle inequality and the absolute homogeneity of $q(\cdot)$ (Proposition~\ref{prop:q}). Combining \eqref{eq-app:1} and \eqref{eq-app:2}, we have the proposition proved.
\end{proof}

\section{Detailed Derivation of Problem~\ref{prob:opt}}\label{sec:detail-prob-opt}
Recall the two approximation of $q(L)$ (\eqref{eq:norm-pi} and \eqref{eq:norm-grad}), plugging either of them into \eqref{eq:approx-2}, and squaring all of the terms for the ease of optimization,  we can formulate the sparse approximation problem as the following finite-dimensional optimization problem, where $\alpha>0$ is added to offer a trade-off between bias and variance.
\begin{align}
     \argmin_{\vw\in \sR^{n_u}_+}\quad  \|\vv - \Phi \vw\|^2_2 + \alpha \sum_{\vx_j \in \gD_u} \bm{1}(w_j=0) \cdot \sigma_j^2 \quad \text{s.t.} \quad \|\vw\|_0 = b, \label{eq:opt-1}
\end{align}
where we denote $\vv\in \sR^m$, $\Phi \in \sR^{m \times n_u}$ and $\sigma_j$ as
\begin{align}
    &\vv:=\frac{1}{n_u} \sum_{j=1}^{n_u} \E_{\rP(\vx_j)}[ \vg_j(\tilde{\vy}_j)],\qquad \Phi:=\frac{1}{b}(\E_{\rP(\vx_1)}[ \vg_1(\tilde{\vy}_1)],\dots,\E_{\rP(\vx_{n_u})}[ \vg_{n_u}(\tilde{\vy}_{n_u})]), \label{eq:opt-vphi}\\
    &\sigma_j=\frac{1}{n_u} \E_{\rP(\vx_j)}[\|\vg_j(\tilde{\vy}_j)- \E_{\rP(\vx_j)}[\vg_j(\tilde{\vy}_j)]\|_2], \label{eq:opt-sig}\\
    &\vg_j(\tilde{\vy}_j):=\begin{cases}
    [\dots,(\ell(\vx_j, \tilde{\vy}_j; \bm{\theta}_i)-\bar \ell),\dots]_{i=1\dots m}^\top, \quad \bar \ell := \frac{1}{m}\sum_{i=1}^m \ell(\vx_j, \tilde{\vy}_j;  \bm{\theta}_i) \quad &\text{ if use } (\ref{eq:norm-pi})\\
    \nabla \ell(\vx_j, \tilde{\vy}_j; \bm{\theta}_0) \quad   &\text{ if use } (\ref{eq:norm-grad}). \label{eq:g-proj}
    \end{cases}
\end{align}
In practice, it is often the case that the number of parameters is less than the number of samples, \textit{i.e.}, $m<n_u$, even for over-parameterized neural networks where the gradient of the last layer is commonly used to represent the full-model gradient~\citep{katharopoulos2018not, ash2019deep}. Therefore, if the batch size is big, \textit{i.e.}, $b>m$, the approximation bias $ \|\vv - \Phi \vw\|^2_2$ may be under-determined with infinitely many $\vw$ to make $\vv = \Phi \vw$, and the optimization (\eqref{eq:opt-1}) may be ''overfitted''. To make our method more stable, we include a $\ell_2$ regularizer $\beta \|\vw-\bm{1}\|_2^2$ with $\beta>0$. Finally, since $w_j\geq 0$, minimizing $\alpha \sum_{\vx_j \in \gD_u} \bm{1}(w_j=0) \cdot \sigma_j^2 $ is equivalent to minimizing $-\alpha \sum_{\vx_j \in \gD_u} \bm{1}(w_j>0) \cdot \sigma_j^2 $. Consequently, we have the following optimization problem (Problem~\ref{prob:opt}). 
\begin{align}
    \argmin_{\vw\in \sR^{n_u}_+}\quad  \|\vv - \Phi \vw\|^2_2 - \alpha \sum_{\vx_j \in \gD_u} \bm{1}(w_j>0) \cdot \sigma_j^2 + \beta  \|\vw-\bm{1}\|_2^2 \quad \text{s.t.} \quad \|\vw\|_0 = b. \label{eq:opt-2-1}
\end{align}

\section{Omitted Algorithms}\label{sec:appendix-alg}
In this section we present the two sub-procedures, \textit{i.e.}, line search and de-bias, shared by two main optimization algorithms (Algorithm~\ref{alg:gdy}\&\ref{alg:iht}), as well as how the optimization (line~6) in Algorithm~\ref{alg:iht} is solved optimally.

The line search sub-procedure (Algorithm~\ref{alg:ls}) optimally solve the problem of $\argmin_{\mu\in \sR} \ f_1(\vw - \mu \vu)$, \textit{i.e.}, given a direction $\vu$ what is the best step size to move the $\vw$ along $\vu$. The de-bias sub-procedure (Algorithm~\ref{alg:db}) adjusts a sparse $\vw$ in its own sparse support for a better solution.

\begin{minipage}{0.48\textwidth}
\IncMargin{1.5em}
\begin{algorithm}[H]
\SetAlgoLined
\Indm 
\setcounter{AlgoLine}{0}
\KwInput{direction $\vu$; starting point $\vw$.}
\KwOutput{step size $\mu$.}
\Indp
$\mu\leftarrow \frac{\langle \Phi \vw - \vv, \Phi \vu \rangle + \beta \langle \vw - \bm{1}, \vu \rangle }{\|\Phi \vu\|_2^2 + \beta \|\vu\|_2^2}$ \hfill {\small \textit{(optimal $\mu$)}}

\Indm 
\KwReturn{$\mu$}
 \caption{LineSearch($\vu, \vw$)}\label{alg:ls}
\Indp
\end{algorithm}
\DecMargin{1.5em}
\end{minipage}
\hfill
\begin{minipage}{0.48\textwidth}
\IncMargin{1.5em}
\begin{algorithm}[H]
\SetAlgoLined
\setcounter{AlgoLine}{0}
\Indm 
\KwInput{starting point $\vw$.}
\KwOutput{improved $\vw$.}
\Indp
$\vu \leftarrow [\nabla f_1(\vw)]_{\supp(\vw)}$ \hfill {\small \textit{(in-support grad.)}}

$\mu\leftarrow$ LineSearch($\vu, \vw$) 

$\vw\leftarrow \vw - \mu \vu$ \hfill {\small \textit{(in-support adjustment)}}

\Indm 
\KwReturn{$\vw$}
 \caption{De-bias($\vw$)}\label{alg:db}
\Indp
\end{algorithm}
\DecMargin{1.5em}
\end{minipage}

Recall the inner optimization (line~6) of Algorithm~\ref{alg:iht} is
\begin{align}
    \vw\leftarrow \underset{\vw \in \sR^{n_u}_+, \|\vw\|_0\leq b}{\argmin} \tfrac{1}{2}\|\bm{w}-\bm{s}\|_2^2 + f_2(\bm{w}).
\end{align}
Noting that $\tfrac{1}{2}\|\bm{w}-\bm{s}\|_2^2 + f_2(\bm{w}) = \textstyle\sum_{j\in [n_u]}(\tfrac{1}{2}(w_j-s_j)^2-\alpha \sigma_j^2)$, this step can be done optimally by simply picking the top-$b$ elements, as shown in the following. Given a $b$-sparse support set $\gS\subset [n_u]$, we can see that
$$
    \underset{\vw \in \sR^{n_u}_+, \supp(\vw)\subseteq \gS}{\min} \textstyle\sum_{j\in [n_u]}(\tfrac{1}{2}(w_j-s_j)^2-\alpha \sigma_j^2)=\textstyle\sum_{j\in \gS}(\tfrac{1}{2}[-s_j]_+^2-\alpha \sigma_j^2).
$$
Therefore, line~6 in Algorithm~\ref{alg:iht} can be done by: (1) find the $b$ smallest $(\tfrac{1}{2}[-s_j]_+^2-\alpha \sigma_j^2)$, denoting the resulting $b$-sparse index set as $\gS^\star$; (2) let $\vw\leftarrow [[\vs]_{\gS^\star}]_+$. 

\section{More Experiment Results}\label{sec:more_exp}
\subsection{Ablation Study: trade-off of uncertainty and representation}\label{sec:exp-ablation}
\begin{wrapfigure}{R}{0.38\textwidth}
  \centering
  \includegraphics[width=1\linewidth]{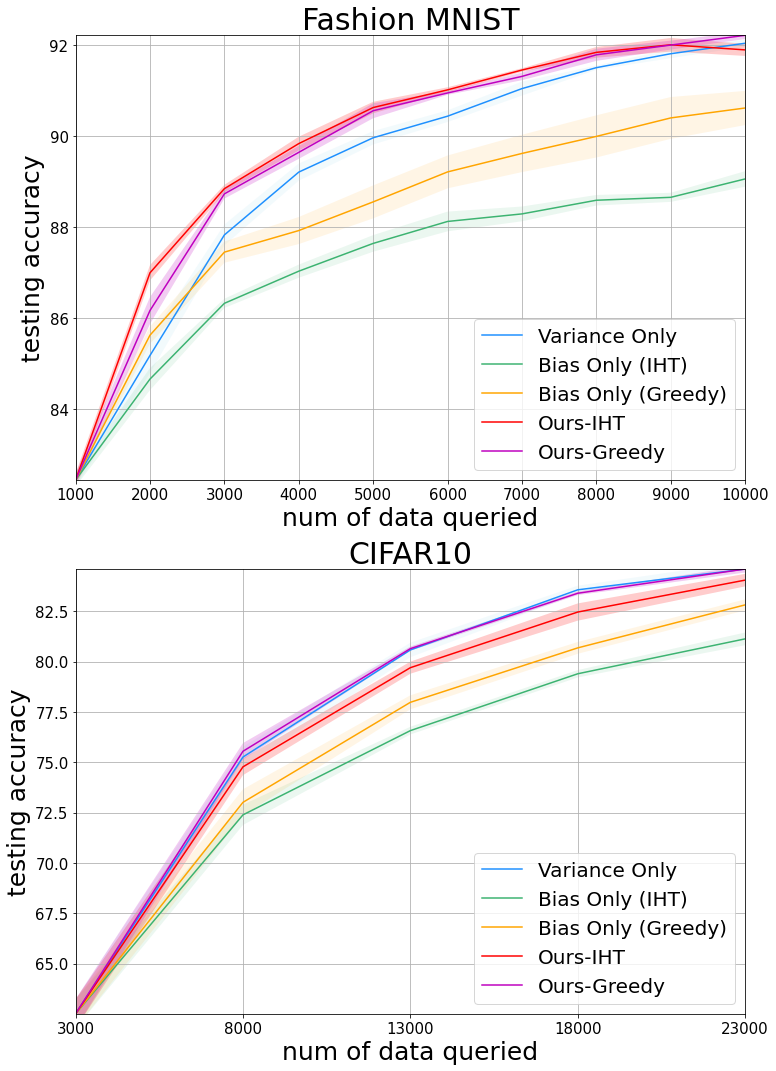}
  \caption{Ablation Study Results on Bayesian models.}
  \label{fig:ablation}
  \vspace{-4em}
\end{wrapfigure}
We perform an ablation study to understand better the trade-off between the variance and the bias terms in our final formulation~\eqref{eq:opt-1}. To remove the bias term, we query the data with top variances. To remove the variance term, we query the data by only minimizing the approximation bias, i.e., setting $\alpha=0$, under both IHT and Greedy optimizations respectively. We take two datasets MNIST and CIFAR-10 in the Bayesian experiment as examples. Results in Figure~\ref{fig:ablation} demonstrate the necessity of taking both uncertainty and representation into consideration during the data acquisitions for ideal performance, while for some datasets like CIFAR-10, the variance contributes much more significantly.

\section{Implementation Details}\label{sec:exp-2}
All experiments are written in PyTorch 1.8.1 and trained on a single NVIDIA Tesla V100 GPU. All hyper-parameters are chosen to ensure models achieve good and stable performance on each dataset, and they are kept identical for all active learning approaches. 
\subsection{Bayesian Active Learning Experiment}
\paragraph{Model Architecture}
We use the exact same model as~\citep{pinsler2019bayesian}, it is a Bayesian neural network consisting of a ResNet-18~\citep{he2016deep} feature extractor followed by a fully connected layer with a ReLU activation, and a final layer allows sampling by local reparametrization~\citep{kingma2015variational} with a softmax activation.

\paragraph{Training details}
Different scales and complexities of different datasets determine their training details. For example, on Fashion MNIST dataset, we use 100 samples for random projections, 1000 seed data, and query 1000 samples for 9 iterations. On CIFAR-10, we use 2000 samples for random projections, 3000 seed data, and query 5000 samples for 4 iterations. On CIFAR-100, we use 2000 samples for random projections, 10000 for seed data, and query 5000 samples for 4 iterations. Because Bayesian Coreset usually finds a much smaller batch than requested, for a fair comparison, we let Bayesian Coreset acquire more data than the batch size and stop the acquisition as long as it has selected a full batch of data. 

\paragraph{Optimization and Hyperparameter Selection}
Due to larger models and more complicated classification tasks, e.g., CIFAR-100, data augmentation(including random cropping and random horizontal flipping) and learning rate scheduler are used in this experiment to achieve good model performance. The model is optimized with the Adam~\citep{kingma2014adam} optimizer using default exponential decay rates (0.9, 0.999) for the moment estimates. Table~\ref{table: bayesian} shows the hyper-parameters in experiment on Bayesian batch active learning, where $bs$ denotes the batch size in dataloader during the model training, $lr$ denotes the learning rate, and $wd$ denotes the weight decay. The hyper-parameters are chosen through grid search.

\begin{table}[h!]
  \caption{Hyperparameters used in Bayesian active learning experiment\ }
  \centering
  \begin{tabular}{lllllllll}
    \textbf{Dataset} & \textbf{Method} & \textbf{Epoch} & $\boldsymbol{bs}$ & $\boldsymbol{\alpha}$ & $\boldsymbol{\beta}$ & $\boldsymbol{lr}$ & $\boldsymbol{wd}$\\
    \toprule
    Fashion MNIST & Ours-IHT & 200 & 256 & $1$ & $10^{-3}$ & 0.001 & $5\times10^{-4}$ \\
    Fashion MNIST & Ours-Greedy & 200 & 256 & $2$ & $0.5$ & 0.001 & $5\times10^{-4}$ \\
    CIFAR-10 & Ours-IHT & 200 & 256 & $1$ & $10^{-6}$ & 0.001 & $5\times10^{-4}$ \\
    CIFAR-10 & Ours-Greedy & 200 & 256 & $2$ & $1$ & 0.001 & $5\times10^{-4}$ \\
    CIFAR-100 & Ours-IHT & 200 & 256 & $1$ & $10^{-6}$ & 0.001 & $5\times10^{-4}$ \\
    CIFAR-100 & Ours-Greedy & 200 & 256 & $1$ & $0.5$ & 0.001 & $5\times10^{-4}$ \\
  \end{tabular}
  \label{table: bayesian}
\end{table}

\subsection{General Active Learning Experiment}
\paragraph{Model Architecture}
On MNIST dataset, we use LeNet-5 model~\citep{lecun2015lenet}. On SVHN and CIFAR10 datasets, we use VGG-16 model~\citep{simonyan2014very}.

\paragraph{Training details}
On MNIST dataset with LeNet-5 model, we use 40 seed data, and query 40 samples for 15 iterations. On SVHN with VGG-16 model, which contains more complicated real-world color images, we use 1000 seed data, and query 1000 samples for 5 iterations. On CIFAR-10 with VGG-16 model, we use 3000 seed data, and query 3000 samples for 5 iterations. 

\paragraph{Optimization and Hyperparameter Selection}
All models are trained using the cross entropy loss with SGD optimizer, and no data augmentation or learning rate scheduler is used. Tabel \ref{table: non-bayesian} shows the hyper-parameters in experiment on general batch active learning, where $bs$ denotes the batch size in dataloader during the model training, $lr$ denotes the learning rate, $m$ denotes the momentum, and $wd$ denotes the weight decay. The hyper-parameters are chosen through grid search.
\begin{table}[h!]
  \caption{Hyperparameters used in general active learning experiment\ }
  \centering
  \begin{tabular}{llllllllll}
    \textbf{Dataset} & \textbf{Method} & \textbf{Epoch} & $\boldsymbol{bs}$ & $\boldsymbol{\alpha}$ & $\boldsymbol{\beta}$ & $\boldsymbol{lr}$ & $\boldsymbol{wd}$\\
    \toprule
    MNIST & Ours-IHT & 150 & 32 & $10^{-8}$ & $10^{-4}$ & 0.01 & 0.9 & $5\times10^{-4}$ \\
    MNIST & Ours-Greedy & 150 & 32 & $10^{-8}$ & $10^{-1}$ & 0.01 & 0.9 & $5\times10^{-4}$ \\
    SVHN & Ours-IHT & 150 & 128 & $10^{-8}$ & $10^{-4}$ & 0.01 & 0.9 & $5\times10^{-4}$ \\
    SVHN & Ours-Greedy & 150 & 128 & $10^{-1}$ & $10^{-1}$ & 0.01 & 0.9 & $5\times10^{-4}$ \\
    CIFAR-10 & Ours-IHT & 100 & 128 & $10^{-8}$ & $10^{-6}$ & 0.001 & 0.9 & $5\times10^{-4}$ \\
    CIFAR-10 & Ours-Greedy & 100 & 128 & $10^{-2}$ & $10^{-1}$ & 0.001 & 0.9 & $5\times10^{-4}$ \\
  \end{tabular}
  \label{table: non-bayesian}
\end{table}

\end{document}